\crefname{algocf}{Algorithm}{Algorithm}
\Crefname{algocf}{Algorithm}{Algorithm}
\titlespacing{\paragraph}{0pt}{0pt}{1ex}
\DeclarePairedDelimiter{\norm}{\lVert}{\rVert}
\DeclarePairedDelimiter{\abs}{\vert}{\rvert}
\newcommand{\citeasnoun}[1]{\citeauthor{#1}~(\citeyear{#1})}
\newcommand{\E}{\mathbb{E}}
\renewcommand{\P}{\mathbb{P}}
\newcommand{\opt}{^\star}
\newenvironment{mprog}{\begin{array}{>{\displaystyle}l>{\displaystyle}l>{\displaystyle}l}}{\end{array}}
\newcommand{\stc}{\\[1ex]  \mbox{s.t.} &}
\newcommand{\cs}{\\[1ex] & }
\newcommand{\minimize}[1]{\min_{#1} &}
\newcommand{\tr}{^{\mathsf{T}}}
\newcommand{\one}{\mathbf{1}}
\newcommand{\Real}{\mathbb{R}}
\renewcommand{\ss}{\,:\,}
\newcommand{\eye}{\mathbf{I}}
\newcommand{\zero}{\mathbf{0}}
\newcommand{\todo}[1]{}
\newcommand{\states}{\mathcal{S}}
\newcommand{\actions}{\mathcal{A}}
\newcommand{\aset}{\mathcal{P}}
\newcommand{\aseth}{\aset^{H}}
\newcommand{\asetht}{\aset^{M}}
\newcommand{\asetb}{\aset^{B}}
\newcommand{\vset}{\mathcal{V}}
\newcommand{\dataset}{\mathcal{D}}
\newcommand{\BU}{T}
\newcommand{\RBU}{\widehat{T}}
\newcommand{\statecount}{S}
\theoremstyle{plain}
\newtheorem{theorem}{Theorem}[section]
\newtheorem{lemma}[theorem]{Lemma}
\newtheorem{proposition}[theorem]{Proposition}
\theoremstyle{definition}
\newtheorem{definition}[theorem]{Definition}
\theoremstyle{remark}
\newtheorem{remark}{Remark}[section]
\newtheorem{example}{Example}[section]
\author{
Marek Petrik,  
Reazul Hasan Russell\thanks{Equal contribution} \\
Department of Computer Science,\\ University of New Hampshire
}
\date{}
\title{Beyond Confidence Regions: Tight Bayesian Ambiguity Sets\\for Robust MDPs}
\begin{document}
	
	\maketitle
	
	
	
	\begin{abstract}
		Robust MDPs~(RMDPs) can be used to compute policies with provable worst-case guarantees in reinforcement learning. The quality and robustness of an RMDP solution are determined by the ambiguity set---the set of plausible transition probabilities---which is usually constructed as a multi-dimensional confidence region. Existing methods construct ambiguity sets as confidence regions using concentration inequalities which leads to overly conservative solutions. This paper proposes a new paradigm that can achieve better solutions with the same robustness guarantees without using confidence regions as ambiguity sets. To incorporate prior knowledge, our algorithms optimize the size and position of ambiguity sets using Bayesian inference. Our theoretical analysis shows the safety of the proposed method, and the empirical results demonstrate its practical promise. 
	\end{abstract}
	
	\section{Introduction}
	
	Markov decision processes (MDPs) provide a versatile framework for modeling reinforcement learning problems~\citep{Bertsekas1996,Sutton1998,Puterman2005}.  An important limitation of MDPs is that they assume that transition probabilities and rewards are known exactly which is rarely the case. Limited data sets, modeling errors, value function approximation, and noisy data are common reasons for errors in transition probabilities~\citep{Iyengar2005,Wiesemann2013,Petrik2014}. This results in policies that are brittle and fail in real-world deployments.
	
	This work targets \emph{batch} reinforcement learning~\citep{Lange2012} in which a good policy needs to be computed from a logged dataset without interacting with a simulator. This setting is common when experimentation is either too expensive or time-consuming, such as in medical care, agriculture, or even robotics. 
	
	Batch reinforcement learning introduces two important challenges~\citep{Petrik2016a,Thomas2015,Li2015,Jiang2015b}. First, the amount of data may be insufficient to compute a good policy. Second, evaluating the quality of a policy without simulation can be difficult. We tackle these challenges by computing a \emph{robust} policy and a high-confidence lower bound on its \emph{true} return. A lower bound on the return can prevent the deployment of a bad policy or justify the need for more data or better modeling~\citep{Petrik2016a,Lim2013,Hanasusanto2013}. 
	
	Robust MDPs (RMDPs) are a convenient model for computing reinforcement learning policies with strong worst-case guarantees. They generalize MDPs by assuming that transition probabilities and/or rewards are not known precisely. They can, instead, take on any value from a so-called \emph{ambiguity set} (also known as an uncertainty set) which represents a set of plausible transition probabilities~\citep{Xu2006,Xu2009,Mannor2012,Petrik2012,Hanasusanto2013,Tamar2014a,Delgado2016,Petrik2016a}. RMDPs are reminiscent of dynamic zero-sum games: the decision maker chooses the best actions, while the adversarial nature chooses the worst transition probabilities from the ambiguity set.
	
	The quality of the optimal RMDP policy depends on the ambiguity set used. It must be the smallest set that is large enough to guarantee that the solution is a lower bound. RL algorithms usually construct data-driven ambiguity sets as \emph{confidence regions} derived from concentration inequalities~\citep{Weissman2003xx,Auer2010a,Thomas2015,Petrik2016a}. Using, for example, a 95\% confidence region over possible transition probabilities translates to a 95\% confidence that the RMDP return lowers the true one. Unfortunately, concentration inequalities lead to solutions that are too conservative to be practical. Another approach is to construct ambiguity sets from likelihood levels of probability distributions, but this method requires complex modeling and does not provide finite-sample guarantees~\citep{Iyengar2005,Nilim2005,Ben-Tal2009,Bertsimas2017}. 
	\todo{Mention how the ambiguity sets are adapated to value functions.}
	In this paper, we argue that constructing ambiguity sets as confidence regions leads to solutions that are unnecessarily conservative. Confidence regions inherently provide robust guarantees for \emph{all} policies and \emph{all} value functions \emph{simultaneously}. It is sufficient, instead, to provide the guarantees for the optimal RMDP policy and value function. Our algorithm (RSVF) provides a tighter lower bound on the return of the optimal policy by interleaving RMDP computations with optimizing the \emph{size} and the \emph{position} of ambiguity sets. Using (hierarchical) Bayesian models helps to further tighten the lower bounds by leveraging prior domain knowledge. We also derive new $L_1$ concentration inequalities of possible independent interest. 
	
	\citeasnoun{Gupta2015} also constructs ambiguity sets that are not confidence regions. However, their setting and objectives are markedly different from ours and do not readily apply to RMDPs. In general, Bayesian methods for constructing ambiguity sets for RMDPs are not yet understood well and have received only limited attention~\citep{Xu2009}.
	
	Confidence regions derived from concentration inequalities have been used previously to compute bounds on the true return in off-policy policy evaluation~\citep{Thomas2015,Thomas2016}. These methods, unfortunately, do not readily generalize to the policy optimization setting, which we target. Other work has focused reducing variance rather than on high-probability bounds~\citep{Munos2016,Li2015,Jiang2015b}. Methods for exploration in reinforcement learning, such as MBIE or UCRL2, also construct ambiguity sets using concentration inequalities~\citep{Strehl2008a,Auer2010,Taleghan2015,Dietterich2013,Strehl2008a} and compute optimistic (upper) bounds to guide exploration. 
	
	The remainder of the paper is organized as follows. \cref{sec:robust_mdps} formally describes the framework and goals of the paper. \cref{sec:confidence_interval} outlines new and existing methods for building ambiguity sets as frequentist confidence regions or Bayesian credible sets. The methods construct these sets around the most-probable transition probabilities. \cref{sec:multiple} describes our main contribution, RSVF, a new method for constructing tight ambiguity sets from Bayesian models that are adapted to the optimal policy. RSVF provides tighter robustness guarantees without using confidence regions which we justify in \cref{sec:why}. Finally, \cref{sec:experiments} presents empirical results on several problem domains. 
	
	\section{Problem Statement: Data-driven RMDPs} \label{sec:robust_mdps}
	
	This section formalizes our goals and reviews relevant results for robust Markov decision processes~(RMDPs). Throughout the paper, we use the symbol $\Delta^\statecount$ to denote the probability simplex in $\Real_+^\statecount$. The symbols $\one$ and $\zero$ denote vectors of all ones and zeros, respectively, of an appropriate size. The symbol $\eye$ represents the identity matrix.
	
	\subsection{Safe Return Estimate}
	
	The underlying reinforcement learning problem is a Markov decision process with states $\states = \{1, \ldots, S \}$ and actions $\actions = \{1, \ldots, A \}$. The rewards $r:\states\times\actions\to\Real$ are known but the true transition probabilities $P\opt: \states \times \actions \to \Delta^\states$ are unknown. The transition probability vector for a state $s$ and an action $a$ is denoted by $p\opt_{s,a}$. As this is a \emph{batch} reinforcement learning setting, a fixed dataset $\dataset$ of transition samples is provided: 
	$\dataset \subseteq \{ (s,a,s') \ss s,s'\in\states, a\in \actions \}$. The only assumption about $\dataset$ is that the state $s'$ in $(s,a,s') \in \mathcal{S}$ is distributed according to the \emph{true} transition probabilities: $s' \sim P\opt(s,a,\cdot)$. We make no assumptions on the policy used to generate the dataset. 
	
	We assume the standard $\gamma$-discounted infinite horizon objective~\citep{Puterman2005}. Because this paper analyzes the impact of using different transition probabilities, we use a subscript to indicate which ones are used. The optimal value function for some transition probabilities $P$ is, therefore, denoted as $v\opt_P: \states \rightarrow \Real$, and the value function for a \emph{deterministic policy} $\pi: \states \rightarrow \actions$ is denoted as $v_P^\pi$. The set of all deterministic stationary policies is denoted by $\Pi$. The total return $\rho(\pi,P)$ of a policy $\pi$ under transition probabilities $P$ is:
	\[\rho(\pi,P) = p_0\tr v^\pi_P, \] 
	where $p_0$ is the initial distribution.
	
	Our \emph{objective} is to compute a policy $\pi: \states\to\actions$ that maximizes the return $\rho(\pi,P\opt)$. Because the objective depends on the unknown $P\opt$, we instead compute a policy with the greatest lower guarantee on the return. The term \emph{safe return estimate} refers to the lower bound estimate.
	\begin{definition}[Safe Return Estimate] \label{def:safety}
		The estimate $\tilde{\rho}: \Pi \rightarrow \Real$ of return is called \emph{safe} for a policy $\pi$ with probability $1-\delta$ if it satisfies:
		\[ 
		\P_{P\opt} \Bigl[ \tilde{\rho}(\pi) \le \rho(\pi,P\opt) \;\vline\; \dataset \Bigr] \ge 1-\delta~.
		\] 
	\end{definition}
	
	\begin{remark}
		Under Bayesian assumptions, $P\opt$ is a random variable and the guarantees are conditional on the dataset $\dataset$. This is different from the frequentist approach, in which the random variable is $\dataset$ and the guarantees are conditional on $P\opt$. See, for example, Sections 5.2.2 and 6.1.1 in \citeasnoun{Murphy2012} for a discussion of the merits of the two approaches. Unless it is apparent from the context, we indicate whether the probability is conditional on $\dataset$ or $P\opt$.
	\end{remark}
	
	Having a safe return estimate is very important in practice. A low safe estimate informs the stakeholders that the policy may not perform well when deployed. They may, instead, choose to gather more data, keep the existing (baseline) policy, or use a more informative domain~\citep{Petrik2016a,Laroche2017}.
	
	\subsection{Robust MDPs} 
	
	Robust Markov Decision Processes (RMDPs) are a convenient model that can be used to compute and tractably optimize the \emph{safe} return estimate ($\max_{\pi} \tilde{\rho}(\pi)$). Our RMDP model has the same states $\states$, actions $\actions$, rewards $r_{s,a}$  as the MDP. The transition probabilities for each state $s$ and action $a$, denoted as $p_{s,a} \in \Delta^\statecount$, are assumed chosen adversarialy from an \emph{ambiguity set} $\aset_{s,a}$. We use $\aset$ to refer cumulatively to $\aset_{s,a}$ for all states $s$ and actions $a$. 
	
	We restrict our attention to \emph{compact} and so-called $s,a$-rectangular ambiguity sets. Rectangular ambiguity sets allow the nature to choose the worst transition probability independently for each state and action~\citep{LeTallec2007,Wiesemann2013}. Limitations of rectangular ambiguity sets are well known~\citep{Mannor2016,Tirinzoni2018,Goyal2018} but they represent a simple, tractable, and practical model. A convenient way of defining ambiguity sets is to use a norm-distance from a given \emph{nominal transition probability} $\bar{p}_{s,a}$:
	\begin{equation} \label{eq:ambiguity_set}
	\aset_{s,a} = \bigl\{p \in \Delta^\statecount \ss \norm{p - \bar{p}_{s,a} }_1 \le \psi_{s,a} \bigr\}
	\end{equation}
	for a given $\psi_{s,a}\ge 0$ and a nominal point $\bar{p}_{s,a}$. We focus on ambiguity sets defined by the $L_1$ norm because they give rise to RMDPs that can be solved very efficiently~\citep{Ho2018}. 
	
	RMDPs have properties that are similar to regular MDPs~(see, for example, \citep{Bagnell2001b,Kalyanasundaram2002,Nilim2005,LeTallec2007,Wiesemann2013}). The robust Bellman operator $\RBU_\aset$ for an ambiguity set $\aset$ for a state $s$ computes the best action with respect to the worst-case realization of the transition probabilities:
	\begin{equation} \label{eq:bellman_definition}
	(\RBU_\aset v)(s) := \max_{a\in\actions}\min_{p \in\aset_{s,a}}  (r_{s,a} + \gamma \cdot p\tr v)  
	\end{equation} 
	The symbol $\RBU^\pi_\aset$ denotes a robust Bellman update for a given \emph{stationary} policy $\pi$. The optimal robust value function $\hat{v}\opt$, and the robust value function $\hat{v}^\pi$ for a policy $\pi$ must, similarly to MDPs, satisfy: 
	\[ \hat{v}\opt = \RBU_\aset \hat{v}\opt, \qquad \hat{v}^\pi = \RBU_\aset^\pi \hat{v}^\pi  ~.\]
	In general, we use a hat to denote quantities in the RMDP and omit it for the MDP. When the ambiguity set $\aset$ is not obvious from the context, we use it as a subscript $\hat{v}\opt_\aset$. The robust return $\hat{p}$ is defined as~\citep{Iyengar2005}:
	\[ \hat{\rho}(\pi, \aset) = \min_{P\in\aset} \rho(\pi, P) = p_0\tr \hat{v}^\pi_\aset~, \]
	where $p_0 \in \Delta^S$ is the initial distribution. In the remainder of the paper, we describe methods that construct $\aset$ from $\dataset$ in order to guarantee that $\hat{\rho}$ is a tight lower bound on $\rho$.

	\section{Ambiguity Sets as Confidence Regions} \label{sec:confidence_interval}
	
	In this section, we describe the standard approach to constructing ambiguity sets as multidimensional confidence regions and propose its extension to the Bayesian setting. This is a natural approach but, as we discuss later, may be unnecessarily conservative.
	
	Before describing how the ambiguity sets are constructed, we need the following auxiliary lemma. The lemma shows that when the robust Bellman update lower-bounds the true Bellman update then the value function estimate is safe.
	\begin{lemma} \label{prop:single_to_many}
		Consider a policy $\pi$, its robust value function $\hat{v}^\pi$, and true value function $v^\pi$ such that $\hat{v}^\pi = \RBU^\pi \hat{v}^\pi$ and $v^\pi = \BU^\pi v^\pi$. Then, $\hat{v}^\pi \le v^{\pi}$ element-wise whenever $\RBU^\pi \hat{v}^\pi \le \BU^{\pi} \hat{v}^\pi$.
	\end{lemma}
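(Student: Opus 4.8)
The plan is to exploit the monotonicity and contraction properties of the true Bellman operator $\BU^\pi$ and reduce the claim to a standard value-iteration argument. First I would combine the hypothesis with the fixed-point condition for $\hat{v}^\pi$: since $\hat{v}^\pi = \RBU^\pi \hat{v}^\pi$ and, by assumption, $\RBU^\pi \hat{v}^\pi \le \BU^\pi \hat{v}^\pi$, it follows immediately that
\[
\hat{v}^\pi \;=\; \RBU^\pi \hat{v}^\pi \;\le\; \BU^\pi \hat{v}^\pi .
\]
In words, $\hat{v}^\pi$ is a subsolution of the true Bellman equation for $\pi$.

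Next I would invoke monotonicity of $\BU^\pi$. For a fixed policy this operator acts as $(\BU^\pi v)(s) = r_{s,\pi(s)} + \gamma\,(p\opt_{s,\pi(s)})\tr v$, which is affine with nonnegative coefficients; hence $u \le w$ element-wise implies $\BU^\pi u \le \BU^\pi w$. Applying $\BU^\pi$ repeatedly to both sides of $\hat{v}^\pi \le \BU^\pi \hat{v}^\pi$ and using monotonicity at each step produces the monotone chain
\[
\hat{v}^\pi \;\le\; \BU^\pi \hat{v}^\pi \;\le\; (\BU^\pi)^2 \hat{v}^\pi \;\le\; \cdots \;\le\; (\BU^\pi)^n \hat{v}^\pi
\]
for every $n \ge 0$.

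Finally I would take the limit $n \to \infty$. Because $\BU^\pi$ is a $\gamma$-contraction in the supremum norm, value iteration converges to its unique fixed point, so $(\BU^\pi)^n \hat{v}^\pi \to v^\pi$. Passing to the limit in the chain above preserves the inequality and yields $\hat{v}^\pi \le v^\pi$ element-wise, which is the claim.

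I do not expect a genuine obstacle here, since this is the classical monotone value-iteration bound. The only points that need to be stated carefully are that $\BU^\pi$ is both monotone and a $\gamma$-contraction (standard facts for discounted MDPs), and the observation that the robust operator $\RBU^\pi$ enters the argument only through the identity $\hat{v}^\pi = \RBU^\pi \hat{v}^\pi$, so no further structural properties of $\RBU^\pi$ are required.
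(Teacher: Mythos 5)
Your proof is correct, but it takes a different route from the paper's. Both arguments start the same way, combining the hypothesis with the fixed-point identity $\hat{v}^\pi = \RBU^\pi \hat{v}^\pi$ to conclude $\hat{v}^\pi \le \BU^\pi \hat{v}^\pi$; after that they diverge. The paper works with the difference of the two fixed-point equations, obtaining $\hat{v}^\pi - v^\pi \le \gamma P_\pi (\hat{v}^\pi - v^\pi)$, rearranging to $(\eye - \gamma P_\pi)(\hat{v}^\pi - v^\pi) \le \zero$, and then multiplying through by $(\eye - \gamma P_\pi)^{-1}$, whose entrywise nonnegativity (read off from its Neumann series) flips nothing and gives the result in one shot of linear algebra. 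You instead iterate $\BU^\pi$ on the subsolution inequality using monotonicity, producing the chain $\hat{v}^\pi \le (\BU^\pi)^n \hat{v}^\pi$, and invoke the $\gamma$-contraction property so that the right-hand side converges to $v^\pi$; the fixed-point equation for $v^\pi$ enters only through this limit. The two proofs are morally the same fact viewed from two sides---the Neumann series $\sum_{k\ge 0} \gamma^k P_\pi^k$ is exactly your value iteration unrolled and summed---but yours is the operator-theoretic version: it never forms the matrix resolvent and so applies verbatim to any monotone $\gamma$-contraction, while the paper's is a finite-dimensional algebraic argument that avoids any limiting step once the nonnegativity of the inverse is granted. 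Both are complete; your only obligations are the two standard facts you flagged (monotonicity and contraction of $\BU^\pi$), which hold for discounted MDPs with a fixed policy.
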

	\vspace{-0.2cm}\noindent The proof is deferred to \cref{app:proofs}. 
	
	Note that the inequality holds with respect to the robust value function $\hat{v}^\pi$. The requirement $\RBU^\pi \hat{v}^\pi \le \BU^{\pi} \hat{v}$ in \cref{prop:single_to_many} can be restated as:
	\begin{equation}  \label{eq:single_to_many}
	\min_{p \in \aset_{s,a}} p\tr \hat{v}^\pi \le p\tr_{s,a} \hat{v}^\pi  ~, 
	\end{equation}
	for each state $s$ and action $a = \pi(s)$. It can be readily seen that the inequality above is satisfied when $p_{s,a} \in \aset_{s,a}$. Next, we describe two algorithms for constructing ambiguity sets $\aset_{s,a}$ such that $p_{s,a}\opt \in \aset_{s,a}$ with high probability.
	
	\subsection{Distribution-free Confidence Region} 
	
	Distribution-free confidence regions are used widely in reinforcement learning to achieve robustness~\citep{Petrik2016a} and to guide exploration~\citep{Taleghan2015,Strehl2008}. The confidence region is constructed around the mean transition probability by combining the Hoeffding inequality with the union bound~\citep{Weissman2003xx,Petrik2016a}. We refer to this set as a \emph{Hoeffding confidence region} and define it as follows for each $s$ and $a$:
	\begin{equation*} 
	\aseth_{s,a} = \left\{ p\in\Delta^S : \norm{p - \bar{p}_{s,a} }_1 \le \sqrt{\frac{2}{n_{s,a}} \log \frac{S A 2^{S}}{\delta} } \right\},
	\end{equation*}
	where $\bar{p}_{s,a}$ is the mean transition probability computed from $\dataset$ and $n_{s,a}$ is the number of transitions in $\dataset$ originating from state $s$ and an action $a$.
	
	\begin{theorem} \label{cor:hoeffding_bound}
		The robust value function $\hat{v}_{\aseth}$ for the ambiguity set $\aseth$ satisfies: 
		\begin{equation} 
		\P_{\dataset} \left[ \hat{v}^\pi_{\aseth} \le v_{P\opt}^{\pi},  \; \forall \pi\in\Pi ~\middle|~ P\opt \right] \ge 1-\delta~.
		\end{equation}
		In addition, suppose that $\hat{\pi}\opt_{\aseth}$ is the optimal solution to the robust MDP. Then, $p_0\tr \hat{v}\opt_{\aseth}$ is a \emph{safe} return estimate of $\hat{\pi}\opt_{\aseth}$. 
	\end{theorem}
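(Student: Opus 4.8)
The plan is to combine a distribution-free $L_1$ concentration bound with a union bound so that the true transition vector $p\opt_{s,a}$ lies in every ambiguity set at once, and then to convert this containment into the value-function inequality via \cref{prop:single_to_many}. Throughout I would condition on $P\opt$ and treat $\dataset$ as the random object, matching the frequentist conditioning $\mid P\opt$ that appears in the statement.

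First I would analyze a single pair $(s,a)$. The nominal point $\bar p_{s,a}$ is the empirical distribution formed from the $n_{s,a}$ next-state samples drawn from $p\opt_{s,a}$, so the Weissman et al.\ $L_1$ inequality gives $\P_\dataset\bigl[\norm{\bar p_{s,a} - p\opt_{s,a}}_1 \ge \epsilon \mid P\opt\bigr] \le 2^{S} e^{-n_{s,a}\epsilon^2/2}$. Substituting the radius $\psi_{s,a} = \sqrt{(2/n_{s,a})\log(S A 2^{S}/\delta)}$ that defines $\aseth_{s,a}$ makes the exponent cancel the $2^{S}$ factor and collapses the bound to $\delta/(SA)$; hence $\P_\dataset[\,p\opt_{s,a}\notin\aseth_{s,a}\mid P\opt\,]\le \delta/(SA)$.

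Next I would union-bound over all $SA$ state--action pairs to obtain a single event $E=\{p\opt_{s,a}\in\aseth_{s,a}\text{ for every }s,a\}$ with $\P_\dataset[E\mid P\opt]\ge 1-\delta$. The key structural observation is that $E$ does not reference any policy, so every policy-dependent conclusion below holds \emph{simultaneously for all} $\pi$ on this one event. On $E$, fix any $\pi$ and put $a=\pi(s)$: since $p\opt_{s,a}$ is feasible for the inner minimization, $\min_{p\in\aseth_{s,a}} p\tr\hat v^\pi \le (p\opt_{s,a})\tr\hat v^\pi$, which is exactly the hypothesis \eqref{eq:single_to_many}, i.e.\ $\RBU^\pi\hat v^\pi \le \BU^\pi\hat v^\pi$. \cref{prop:single_to_many} then delivers $\hat v^\pi_{\aseth}\le v^\pi_{P\opt}$ element-wise, and since this holds for every $\pi$ on $E$, the first claim follows.

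For the safe return estimate I would specialize to $\pi=\hat\pi\opt_{\aseth}$ and use $\hat v\opt_{\aseth}=\hat v^{\hat\pi\opt_{\aseth}}_{\aseth}$. On $E$, left-multiplying the element-wise bound $\hat v\opt_{\aseth}\le v^{\hat\pi\opt_{\aseth}}_{P\opt}$ by the nonnegative initial distribution $p_0$ yields $p_0\tr\hat v\opt_{\aseth}\le \rho(\hat\pi\opt_{\aseth},P\opt)$, which is precisely the safety condition of \cref{def:safety} read in the frequentist sense. I expect the concentration and union-bound calculations to be routine; the one place that needs care is the quantifier bookkeeping---drawing all policy-dependent conclusions inside the single event $E$, rather than union-bounding over the exponentially many policies in $\Pi$, which would be hopelessly loose. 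A secondary caveat is that the Weissman bound requires the $n_{s,a}$ samples from each pair to be independent draws from $p\opt_{s,a}$, so I would appeal to the standard reading of the dataset assumption to justify this.
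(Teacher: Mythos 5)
Your proposal is correct and takes essentially the same route as the paper: the paper's own proof simply invokes \cref{prop:confidence_interval_freq} (whose content is exactly your union bound over state--action pairs followed by the application of \cref{prop:single_to_many} on the containment event) together with \cref{lem:single_one} (the paper's own Weissman-style $L_1$ concentration bound), and handles the second claim by specializing the uniform-over-$\Pi$ bound to $\hat{\pi}\opt_{\aseth}$, just as you do. The only cosmetic difference is that you cite Weissman's inequality with the $2^{S}$ factor while the paper derives its own elementary variant with $(2^{S}-2)$; both are covered by the radius used in the definition of $\aseth$.
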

	\vspace{-0.2cm} \noindent The proof is deferred to \cref{app:proofs} and is a simple extension of prior results~\citep{Petrik2016a}. 
	
	To better understand the limitations of using concentration inequalities, we derive a new, and significantly tighter, ambiguity set. The size of $\aseth$ grows linearly with the number of states because of the $2^S$ term. This means that the size of $\dataset$ must scale about quadratically with the number of states to achieve the same confidence. We shrink the Hoeffding set by assuming that the value function is monotone (e.g. $v(1) \ge v(2) \ge \ldots$). It is then sufficient to use the following significantly smaller ambiguity set:
	\[
	\asetht_{s,a} = \left\{ p \in\Delta^S \ss \norm{p\opt - \bar{p}_{s,a} }_1 \le \sqrt{\frac{2}{n_{s,a}} \log \frac{S^2 A}{\delta} } \right\}~.
	\]
	Note the lack of the $2^S$ term in comparison with $\aseth$. This auxiliary result is proved in \cref{sec:improved_bounds}. We emphasize that the aim of this bound is to understand the limitations of distribution free bounds, and we use this set even the monotonicity is not assured.

	\subsection{Bayesian Credible Region (BCI)} 
	
	We now describe how to construct ambiguity sets from Bayesian credible (or confidence) regions. To the best of our knowledge, this approach has not been studied in depth previously. The construction starts with a (hierarchical) Bayesian model that can be used to sample from the posterior probability of $P\opt$ given data $\dataset$. The implementation of the Bayesian model is irrelevant as long as it generates posterior samples efficiently. For example, one may use a Dirichlet posterior, or use MCMC sampling libraries like JAGS, Stan, or others~\citep{Gelman2014}. 
	
	The posterior distribution is used to optimize for the \emph{smallest} ambiguity set around the mean transition probability. Smaller sets, for a fixed nominal point, are likely to result in less conservative robust estimates. The BCI ambiguity set is defined as follows:
	\[ \asetb_{s,a} = \left\{ p\in\Delta^S \ss \norm{p - \bar{p}_{s,a}}_1 \le \psi_{s,a}^B \right\} ~,\]
	where nominal point is $\bar{p}_{s,a} = \E_{P\opt}[p\opt_{s,a} ~|~ \dataset]$. 
	
	There is no closed-form expression for the Bayesian ambiguity set size. It must be computed by solving the following optimization problem for each state $s$ and action $a$:
	\[
	\psi^B_{s,a} = \min_{\psi\in\Real_+} \left\{\psi \ss \P\left[ \norm{p\opt_{s,a} - \bar{p}_{s,a}}_1 > \psi ~|~ \dataset \right] < \frac{\delta}{SA} \right\}~.
	\]
	The nominal point $\bar{p}_{s,a}$ is fixed (not optimized) to preserve tractability. This optimization problem can be solved by the Sample Average Approximation~(SAA) algorithm~\citep{Shapiro2014}. The main idea is to sample from the posterior distribution and then choose the minimal size $\psi_{s,a}$ that satisfies the constraint. \Cref{alg:bayes}, in the appendix, summarizes the sort-based method.  
	
	We assume that it is possible to draw enough samples from $P\opt$ that the sampling error becomes negligible. Because the finite-sample analysis of SAA is simple but tedious, we omit it in the interest of clarity. 
	
	The Bayesian ambiguity sets also guarantee safe estimates.
	\begin{theorem} \label{cor:bci_bound}
		The robust value function $\hat{v}_{\asetb}$ for the ambiguity set $\asetb$ satisfies:
		\[ \P_{P\opt} \left[ \hat{v}^\pi_{\asetb} \le v_{P\opt}^{\pi}, \; \forall \pi\in\Pi ~\middle|~ \dataset \right] \ge 1-\delta~. \]
		In addition, suppose that $\hat{\pi}\opt_{\asetb}$ is the optimal solution to the robust MDP. Then, $p_0\tr \hat{v}\opt_{\asetb}$ is a \emph{safe} return estimate of $\hat{\pi}\opt_{\asetb}$. 
	\end{theorem}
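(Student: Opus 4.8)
The plan is to mirror the argument behind \cref{cor:hoeffding_bound}, replacing the frequentist concentration step with the Bayesian coverage guarantee that is built directly into the definition of $\psi^B_{s,a}$. Conditional on the data, both the nominal point $\bar{p}_{s,a}$ and the radius $\psi^B_{s,a}$ are fixed quantities, so under the posterior measure $\P_{P\opt}[\,\cdot \mid \dataset]$ the only source of randomness is the true transition matrix $P\opt$ itself.

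First I would record the per-pair coverage bound that is immediate from the optimization defining $\psi^B_{s,a}$: for every state-action pair,
\[ \P_{P\opt}\!\left[ p\opt_{s,a} \in \asetb_{s,a} \;\middle|\; \dataset \right] \;=\; \P_{P\opt}\!\left[ \norm{p\opt_{s,a} - \bar{p}_{s,a}}_1 \le \psi^B_{s,a} \;\middle|\; \dataset \right] \;\ge\; 1 - \frac{\delta}{SA}~. \]
A union bound over the $SA$ state-action pairs then produces the single joint event
\[ E \;=\; \bigl\{ p\opt_{s,a} \in \asetb_{s,a} \ \text{for all } s\in\states,\, a\in\actions \bigr\}, \qquad \P_{P\opt}[E \mid \dataset] \;\ge\; 1 - \delta~. \]

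Next I would show that this one event controls \emph{every} policy at once. Fix any $\pi\in\Pi$ and state $s$, and put $a = \pi(s)$. On $E$ we have $p\opt_{s,a}\in\asetb_{s,a}$, so minimizing over the ambiguity set can only lower the value, giving $\min_{p\in\asetb_{s,a}} p\tr \hat{v}^\pi_{\asetb} \le (p\opt_{s,a})\tr \hat{v}^\pi_{\asetb}$. This is precisely the restated hypothesis \cref{eq:single_to_many}, equivalently $\RBU^\pi \hat{v}^\pi_{\asetb} \le \BU^\pi \hat{v}^\pi_{\asetb}$, so \cref{prop:single_to_many} yields $\hat{v}^\pi_{\asetb} \le v^\pi_{P\opt}$ element-wise. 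Because the same event $E$ simultaneously covers all pairs---this is exactly where $s,a$-rectangularity enters---the element-wise bound holds for \emph{all} $\pi\in\Pi$ on $E$, which is the first claim.

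The safety statement follows by specialization. Taking $\pi = \hat{\pi}\opt_{\asetb}$ on $E$ and left-multiplying the element-wise inequality by the nonnegative vector $p_0$ gives $p_0\tr \hat{v}\opt_{\asetb} = p_0\tr \hat{v}^{\hat{\pi}\opt}_{\asetb} \le p_0\tr v^{\hat{\pi}\opt}_{P\opt} = \rho(\hat{\pi}\opt_{\asetb}, P\opt)$, so the estimate $p_0\tr \hat{v}\opt_{\asetb}$ lies at or below the true return with posterior probability at least $1-\delta$, which is \cref{def:safety}. I do not expect any individual step to pose real difficulty: the per-pair bound holds by construction, the union bound is routine, and \cref{prop:single_to_many} carries the analytic load. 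The only point needing care is the order of quantifiers in the first claim---that a single $1-\delta$ event suffices for a statement quantified over all of $\Pi$---which is justified by observing that a deterministic policy queries only the pairs $(s,\pi(s))$, every one of which lies inside $E$.
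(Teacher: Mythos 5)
Your proof is correct and takes essentially the same route as the paper: the per-pair coverage guarantee from the definition of $\psi^B_{s,a}$, a union bound over all state--action pairs, the containment argument $p\opt_{s,a}\in\asetb_{s,a} \Rightarrow \min_{p\in\asetb_{s,a}} p\tr \hat{v}^\pi \le (p\opt_{s,a})\tr \hat{v}^\pi$ feeding into \cref{prop:single_to_many}, and uniformity over policies for the safety claim. The only cosmetic difference is that the paper packages your middle steps into the separately stated \cref{prop:confidence_interval} and cites it, whereas you inline that argument.
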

	\vspace{-0.2cm} \noindent The proof is deferred to \cref{app:proofs}.
	
	\begin{figure}
		\centering
		\includegraphics[width=0.39\linewidth]{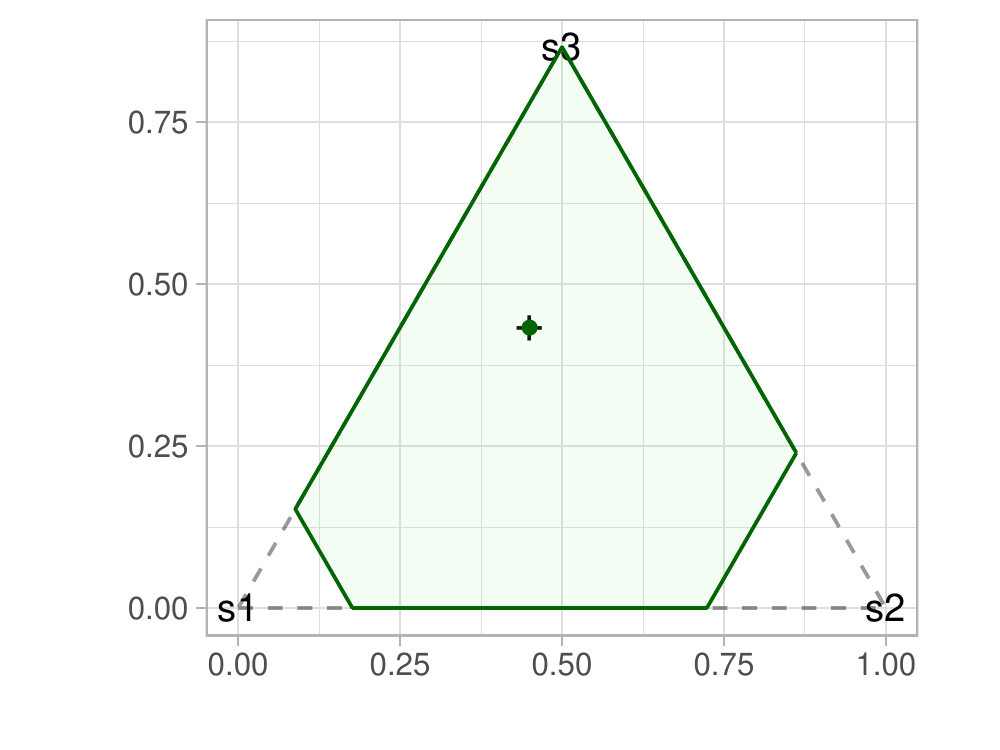}    
		\includegraphics[width=0.39\linewidth]{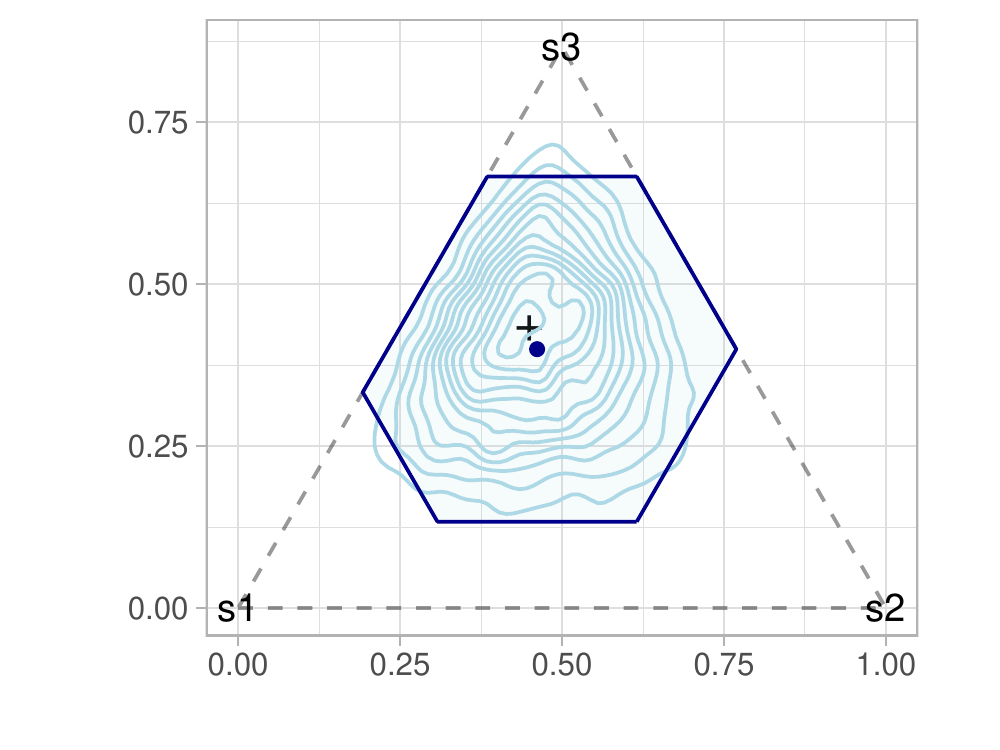}    
		\caption{$90\%$-confidence ambiguity sets $\aseth_{s_1,a_1}$ (left) and $\asetb_{s_1,a_1}$ (right)  projected onto the 3-dimensional probability simplex. } \label{fig:confidence_example}
	\end{figure}
	
	\begin{example} \label{exm:basic} 
		Consider an RMDP with 3 states: $s_1, s_2, s_3$ and a single action $a_1$. Assume that the true transition probability is $P\opt(s_1,a_1,\cdot) = [0.3, 0.2, 0.5]$. In $\dataset$, there are $3$ occurrences of transitions $(s_1,a_1,s_1)$, $2$ of transitions $(s_1,a_1,s_2)$, and $5$ of transitions $(s_1,a_1,s_3)$. The prior distribution over $p\opt_{s_1,a_1}$ is Dirichlet with concentration parameters $\alpha = (1,1,1)$. \Cref{fig:confidence_example} depicts ambiguity sets for state $s_1$ and action $a_1$. The plus sign marks $p\opt_{s_1,a_1}$, while the dot marks the nominal point of the ambiguity set; the contours indicate the density of the posterior Dirichlet distribution.
	\end{example}
	
	BCI ambiguity sets $\asetb$ can be much less conservative than Hoeffding sets $\aseth,\asetht$ given informative priors, but also involve greater computation complexity. Next, we further improve on BCI.
	
	\section{Optimizing Ambiguity Sets to Values} \label{sec:multiple}
	
	In this section, we describe a new algorithm for constructing Bayesian ambiguity sets that can compute less-conservative lower bounds on the return.  RSVF (robustification with sensible value functions) is a Bayesian method that uses posterior samples like BCI. The main difference is that RSVF interleaves solving the robust MDP with constructing ambiguity sets. This means that it can construct sets that are better adapted to the optimal policy. 
	
	RSVF is outlined in \cref{alg:IAVF}. It intends to construct an \emph{optimal ambiguity set} $\aset$ for the \emph{optimal robust} value function $\hat{v}\opt_\aset$. This approach, of course, creates a difficult dependency loop. The value function $\hat{v}\opt_\aset$ depends on the ambiguity set $\aset$ and the optimal set $\aset$ depends on $\hat{v}\opt_\aset$. RSVF takes an optimistic (and heuristic) approach to this hurdle. It starts with a small set of potential optimal value functions (POV) and constructs an ambiguity set that is safe for these value functions. It keeps increasing the POV set until $\hat{v}\opt$ is in the set and the policy is safe. 
	
	\begin{algorithm}
		\KwIn{Confidence $1-\delta$ and posterior $\P_{P\opt}[\cdot ~|~\dataset]$ }
		\KwOut{Policy $\pi$ and lower bound $\tilde{\rho}(\pi)$}
		$k\gets 0$\;
		Pick some initial value function $\hat{v}_0$\;
		Initialize POV: $\vset_0 \gets \emptyset$ \;    
		\Repeat{safe for all $s,a$: $\mathcal{K}_{s,a}(\hat{v}_{k}) \cap \aset_{s,a}^{k} \neq \emptyset$}{
			Augment POV: $\vset_{k+1} \gets \vset_k \cup \{ v_k \}$ \;
			For all $s,a$ update  $\aset^{k+1}_{s,a} \gets \mathcal{L}_{s,a}(\vset_{k+1})$ \;
			Solve $\hat{v}_{k+1} \gets \hat{v}\opt_{\aset_{k+1}}$ and $\hat{\pi}_{k+1}\gets \hat{\pi}\opt_{\aset_{k+1}}$\;
			$k \gets k + 1$ \;
		}
		\Return $(\hat{\pi}_k, p_0\tr \hat{v}_k)$ \;
		\caption{RSVF: Adapted Ambiguity Sets} \label{alg:IAVF}
	\end{algorithm}
	
	We are now ready to describe how the ambiguity sets in \cref{alg:IAVF} are constructed. The set $\mathcal{K}_{s,a}(v)$, for each $s,a$, denotes the set of safety-sufficient transition probabilities. That means that if the ambiguity set $\aset_{s,a}$ intersects $\mathcal{K}_{s,a}(\hat{v}^\pi_\aset)$ for each state $s$ and action $a$ then the value function $\hat{v}^\pi_\aset$ is safe. This set is defined as follows:
	\begin{equation} \label{eq:optimal_hyperplane}
	\begin{aligned}
	\mathcal{K}_{s,a}(v) &= \left\{ p\in\Delta^S \ss p\tr v \le g_{s,a}(v) \right\} \\
	g_{s,a}(v) &= \max \left\{ g \ss \P_{P\opt} [g \le (p\opt_{s,a})\tr v \;|\; \dataset] \ge \zeta \right\}~,
	\end{aligned}
	\end{equation}
	where $\zeta = 1 - \delta/(SA)$. The maximization in \eqref{eq:optimal_hyperplane} can be solved by SAA in time that is quasi-linear in the number of samples~\citep{Shapiro2014} as follows. Sample points $q_i$ from the probability distribution of $p\opt_{s,a}$ and sort them by $v\tr q_i$. The value $g_{s,a}$ is then the $1 - \delta/(SA)$ quantile. 
	
	The next lemma formalizes the safety-sufficiency of $\mathcal{K}$. Note that the rewards $r_{s,a}$ are not a factor in this lemma because they are certain and cancel out.
	\begin{lemma} \label{lem:optimal_set}
		Consider any ambiguity set $\aset_{s,a}$ and a value function $v$. Then $\min_{p\in\aset_{s,a}} p\tr v \le (p\opt_{s,a})\tr v$ with probability $1 - \delta/(SA)$ if and only if $\aset_{s,a} \cap \mathcal{K}_{s,a}(v) \neq \emptyset$.
	\end{lemma}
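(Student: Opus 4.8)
The plan is to reduce the probabilistic statement to a single scalar comparison and then split the claimed equivalence into two elementary equivalences that I chain through a common middle term. Write $m := \min_{p\in\aset_{s,a}} p\tr v$; since the paper restricts attention to \emph{compact} ambiguity sets and $p \mapsto p\tr v$ is linear, this minimum is attained and $m$ is a deterministic scalar that does \emph{not} depend on the random $p\opt_{s,a}$. Under the posterior, the quantity $X := (p\opt_{s,a})\tr v$ is a real-valued random variable, so the event in the lemma is simply $\{m \le X\}$, and the left-hand condition reads $\P_{P\opt}[m \le X \mid \dataset] \ge \zeta$ with $\zeta = 1 - \delta/(SA)$.

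First I would establish the quantile equivalence
\[
\P_{P\opt}[m \le X \mid \dataset] \ge \zeta \quad\Longleftrightarrow\quad m \le g_{s,a}(v).
\]
The key observation is that the survival function $g \mapsto \P_{P\opt}[g \le X \mid \dataset]$ is non-increasing, and $g_{s,a}(v)$ is by definition the largest threshold at which it is still at least $\zeta$. For $(\Leftarrow)$, if $m \le g_{s,a}(v)$ then $\{X \ge g_{s,a}(v)\} \subseteq \{X \ge m\}$, so monotonicity gives $\P[m \le X] \ge \P[g_{s,a}(v) \le X] \ge \zeta$, where the last inequality uses that the maximizer in \eqref{eq:optimal_hyperplane} attains its constraint. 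For $(\Rightarrow)$ I argue by contrapositive: if $m > g_{s,a}(v)$, then $m$ exceeds the largest threshold meeting the constraint, so $\P[m \le X] < \zeta$, contradicting the hypothesis.

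Second I would unpack the geometric condition. By definition $\mathcal{K}_{s,a}(v) = \{p \in \Delta^S \ss p\tr v \le g_{s,a}(v)\}$, so $\aset_{s,a} \cap \mathcal{K}_{s,a}(v) \neq \emptyset$ says exactly that some $p \in \aset_{s,a}$ satisfies $p\tr v \le g_{s,a}(v)$; since the minimum $m$ over the compact set $\aset_{s,a}$ is attained, this is equivalent to $m \le g_{s,a}(v)$. Chaining the two equivalences through the common middle term $m \le g_{s,a}(v)$ then yields the lemma in both directions.

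The main obstacle is the edge-case handling in the first equivalence: I need the maximum defining $g_{s,a}(v)$ to be attained, and ties or atoms in the distribution of $X$ could in principle make $\P[g_{s,a}(v) \le X]$ behave awkwardly at the threshold. This is clean in the SAA setting the paper actually uses, where the posterior is represented by finitely many samples $q_i$ and the survival function is a right-continuous step function, so the supremum is attained at one of the sample values $v\tr q_i$ and the quantile argument goes through verbatim. I would therefore either state the lemma for this empirical distribution or, more generally, assume right-continuity of the survival function, which suffices to make the attainment rigorous without disturbing the two-step chaining above.
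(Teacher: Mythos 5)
Your proof is correct and takes essentially the same route as the paper's: both directions rest on the attainment of the minimum over the compact set $\aset_{s,a}$ together with the two defining properties of $g_{s,a}(v)$ (it satisfies its own probabilistic constraint, and it is maximal among thresholds that do), which you merely repackage as a chain of two equivalences through the middle term $m \le g_{s,a}(v)$ instead of arguing via witness points $\hat{p}$ as the paper does. Your concluding worry about attainment of the maximum is unnecessary: the map $g \mapsto \P_{P\opt}\left[g \le (p\opt_{s,a})\tr v \mid \dataset\right]$ is non-increasing and left-continuous (by continuity from above of probability measures), and $(p\opt_{s,a})\tr v$ is bounded, so the supremum defining $g_{s,a}(v)$ is always attained without any right-continuity assumption or restriction to the SAA empirical distribution.
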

	
	\begin{proof}
		To show the ``if'' direction, let $\hat{p} \in \aset_{s,a} \cap \mathcal{K}_{s,a}(v)$. Such $\hat{p}$ exists because the intersection is nonempty. Then, $\min_{p\in\aset_{s,a}} p\tr v \le \hat{p}\tr v \le g_{s,a}(v)$. By definition, $g_{s,a}(v) \le (p\opt_{s,a})\tr v$ with probability $1 - \delta/(SA)$.
		
		To show the ``only if'' direction, suppose that $\hat{p}$ is a minimizer in $\min_{p\in\aset_{s,a}} p\tr v$. The premise translates to $\P_{P\opt} [ \hat{p}\tr v \le (p\opt_{s,a})\tr v \;|\; \dataset] \ge 1 - \delta/(SA)$. Therefore, $g_{s,a}(v) \ge \hat{p}\tr v$ and $\hat{p}\in \aset_{s,a} \cap \mathcal{K}_{s,a}$ and the intersection is non-empty.
	\end{proof}
	
	The purpose of the ambiguity set $\mathcal{L}_{s,a}(\vset)$ for POV set $\vset$ is to guarantee that the robust estimate for $s,a$ is safe for any of the value functions $v$ in $\vset$. Its center is chosen to minimize its size while intersecting $\mathcal{K}_{s,a}(v)$ for each $v$ in $\vset$ and is constructed as follows.
	\begin{equation} \label{eq:center_point}
	\begin{aligned}
	\mathcal{L}_{s,a}(\vset) &= \bigl\{ p \in \Delta^S \ss \norm{p - \theta_{s,a}(\vset) }_1 \le \psi_{s,a}(\vset) \bigr\} \\
	\psi_{s,a}(\vset) &= \min_{p\in\Delta^S} f(p), \quad \theta_{s,a}(\mathcal{V}) \in \arg\min_{p\in\Delta^S} f(p) \\
	f(p) &= \max_{v\in\vset} \min_{q \in \mathcal{K}_{s,a}(v)} \norm{q - p}_1 \\
	\end{aligned}
	\end{equation}
	The optimization in \eqref{eq:center_point} can be readily represented and solved as a linear program. The following lemma formalizes the properties of $\mathcal{L}_{s,a}$.
	\begin{lemma}
		For any finite set $\vset$ of value functions, the following inequality holds for all $v\in\vset$ simultaneously:
		\[ \P_{P\opt} \left[ \min_{p \in \mathcal{L}_{s,a}(\vset)} p\tr v \le (p_{s,a}\opt)\tr v ~\middle|~ \dataset \right] \ge 1-\frac{\delta}{SA} ~.\]
	\end{lemma}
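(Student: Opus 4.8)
The plan is to split the claim into a deterministic, purely geometric fact about the constructed set $\mathcal{L}_{s,a}(\vset)$ and a probabilistic fact that \cref{lem:optimal_set} already supplies. The crux is that, by design, $\mathcal{L}_{s,a}(\vset)$ meets the halfspace set $\mathcal{K}_{s,a}(v)$ for every $v\in\vset$; once that is shown, the probability bound follows by invoking \cref{lem:optimal_set} for each $v$ separately.

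First I would unpack the minimax construction in \eqref{eq:center_point}. Writing $d(p,\mathcal{K}) = \min_{q\in\mathcal{K}}\norm{q-p}_1$ for the $L_1$ distance, we have $f(p) = \max_{v\in\vset} d(p,\mathcal{K}_{s,a}(v))$, and $\theta_{s,a}(\vset)$ minimizes $f$ with optimal value $\psi_{s,a}(\vset) = f(\theta_{s,a}(\vset))$. Fixing any $v\in\vset$, the definition of the maximum yields $d(\theta_{s,a}(\vset),\mathcal{K}_{s,a}(v)) \le f(\theta_{s,a}(\vset)) = \psi_{s,a}(\vset)$. Since $\mathcal{K}_{s,a}(v)$ is the intersection of the simplex with a closed halfspace, it is compact and (see below) nonempty, so this distance is attained at some $q_v\in\mathcal{K}_{s,a}(v)$ with $\norm{q_v-\theta_{s,a}(\vset)}_1 \le \psi_{s,a}(\vset)$. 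That inequality is precisely the membership condition defining $\mathcal{L}_{s,a}(\vset)$, hence $q_v\in\mathcal{L}_{s,a}(\vset)\cap\mathcal{K}_{s,a}(v)$, and the intersection is nonempty for every $v\in\vset$ using one and the same set $\mathcal{L}_{s,a}(\vset)$.

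With nonemptiness established, I would apply the ``if'' direction of \cref{lem:optimal_set} to each $v\in\vset$ in turn: because $\mathcal{L}_{s,a}(\vset)\cap\mathcal{K}_{s,a}(v)\neq\emptyset$, we obtain $\min_{p\in\mathcal{L}_{s,a}(\vset)} p\tr v \le (p_{s,a}\opt)\tr v$ with probability at least $1-\delta/(SA)$, which is exactly the displayed bound. The word \emph{simultaneously} is carried entirely by the geometric step: a single ambiguity set $\mathcal{L}_{s,a}(\vset)$ certifies all of these inequalities. I would emphasize in the write-up that this is the per-$v$ statement and not a joint high-probability event over all $v$ at once --- the latter would only give $1-\abs{\vset}\,\delta/(SA)$ by a union bound, since the ``bad'' events $\{(p_{s,a}\opt)\tr v < g_{s,a}(v)\}$ for different $v$ need not be nested.

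The one technical point the geometric step silently relies on, and which I expect to be the main (if minor) obstacle, is the nonemptiness and compactness of $\mathcal{K}_{s,a}(v)$. Compactness is immediate, and it also guarantees the distance to $\mathcal{K}_{s,a}(v)$ is attained. Nonemptiness holds because $g_{s,a}(v)$ in \eqref{eq:optimal_hyperplane} is a lower quantile of $(p_{s,a}\opt)\tr v$ with $p_{s,a}\opt\in\Delta^S$, so $g_{s,a}(v)\ge \min_s v(s)$; consequently the vertex of $\Delta^S$ placing all mass on some $s^\star\in\arg\min_s v(s)$ satisfies $p\tr v = \min_s v(s) \le g_{s,a}(v)$ and therefore lies in $\mathcal{K}_{s,a}(v)$.
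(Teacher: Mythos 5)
Your proof is correct and takes essentially the same route as the paper's: pick, for each $v\in\vset$, the nearest point $q_v\in\mathcal{K}_{s,a}(v)$ to $\theta_{s,a}(\vset)$, note that $\norm{q_v-\theta_{s,a}(\vset)}_1\le\psi_{s,a}(\vset)$ places $q_v$ in $\mathcal{L}_{s,a}(\vset)$, and invoke \cref{lem:optimal_set} separately for each $v$. Your explicit check that $\mathcal{K}_{s,a}(v)$ is nonempty and compact (so the minimizing $q_v$ exists) fills in a detail the paper's proof leaves implicit, but it is a refinement of the same argument, not a different one.
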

	\begin{proof}
		Assume an arbitrary $v\in\vset$ and let $q\opt_v \in\arg\min_{q \in \mathcal{K}_{s,a}(v)} \norm{q - \theta_{s,a}(\vset)}_1$ using the notation of \eqref{eq:center_point}. From the definition of $\theta_{s,a}(\vset)$ in \eqref{eq:center_point}, the value $q_v$ is in the ambiguity set $\mathcal{L}_{s,a}(\vset)$. Given that also $q_v\in\mathcal{K}_{s,a}(v)$, \cref{lem:optimal_set} shows that:
		\[ \P_{P\opt} \left[ \min_{p \in \mathcal{L}_{s,a}(\vset)} p\tr v \le (p_{s,a}\opt)\tr v ~\middle|~ \dataset \right] \ge 1-\frac{\delta}{SA}~, \]
		because $q_v \in \mathcal{L}_{s,a}(v)\cup\mathcal{K}_{s,a}(v) \neq \emptyset$. This completes the proof since $v$ is any from $\vset$.
	\end{proof}
	
	\begin{figure}
		\centering
		\includegraphics[width=0.45\linewidth]{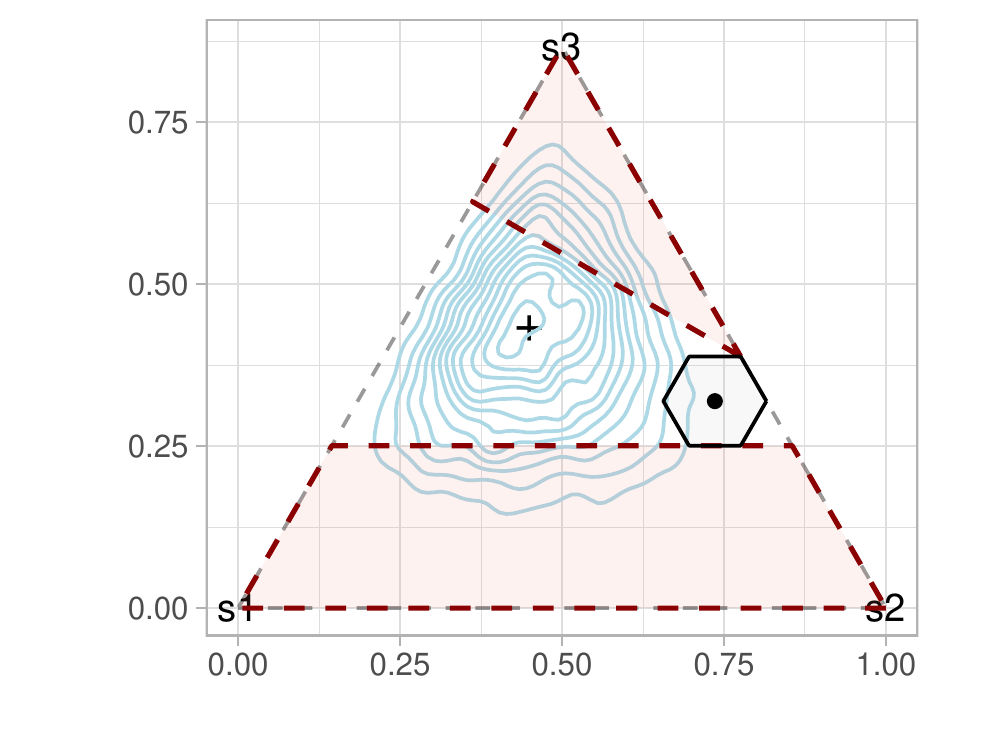}
		\caption{Simplex projection of sets $\mathcal{K}_{s_1,a_1}(v_i)$ (dashed red) for $i=1,2$ and $\mathcal{L}_{s_1,a_1}(\{v_1,v_2\})$ (solid black).}
		\label{fig:rsvf_set}
	\end{figure} 
	
	\begin{example}
		Assume the setting from \cref{exm:basic} and two value functions $v_1 = [0,0,1]$ and $v_2 = [2,1,0]$. \Cref{fig:rsvf_set} depicts $\mathcal{K}_{s_1,a_1}(v_1)$ and $\mathcal{K}_{s_1,a_1}(v_2)$ as dashed red and  $\mathcal{K}_{s_1,a_1}(\{v_1,v_2\})$ as solid black. 
	\end{example}
	
	We can now prove the safety of RSVF. 
	\begin{theorem} \label{thm:safety_condition_correct}
		Suppose that \cref{alg:IAVF} terminates with a policy $\hat\pi_k$ and a value function $\hat{v}_k$ in the iteration $k$. Then, the return estimate $p_0\tr\hat{v}_k$ is safe:
		\[ \P_{P\opt} \left[ p_0\tr\hat{v}_k \le p_0\tr v_{P\opt}^{\hat\pi_k} ~\middle|~ \dataset \right] \ge 1-\delta~. \]
	\end{theorem}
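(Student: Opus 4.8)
The plan is to reduce the theorem, through the termination test, to the deterministic comparison \cref{prop:single_to_many}, and to pay for the probabilistic part of the statement with a union bound over states. First I would record the structural facts that hold at termination. Since the algorithm stops at iteration $k$ with $\hat{v}_k = \hat{v}\opt_{\aset_k}$ and $\hat\pi_k = \hat\pi\opt_{\aset_k}$, the standard $s,a$-rectangular RMDP identity $\hat{v}\opt_{\aset_k} = \hat{v}^{\hat\pi_k}_{\aset_k}$ gives that $\hat{v}_k$ is the robust fixed point of the \emph{returned} policy, i.e.\ $\hat{v}_k = \RBU^{\hat\pi_k}_{\aset_k}\hat{v}_k$. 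I would also stress that both $\hat{v}_k$ and $\aset_k$ are deterministic functions of $\dataset$ and carry no dependence on the random $P\opt$, so every event below is well-defined in the posterior probability space conditioned on $\dataset$.

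Next I would exploit the termination condition, which asserts $\mathcal{K}_{s,a}(\hat{v}_k) \cap \aset^k_{s,a} \neq \emptyset$ for all $s,a$, and in particular for each state $s$ with $a = \hat\pi_k(s)$. Applying the ``if'' direction of \cref{lem:optimal_set} with $v = \hat{v}_k$ and $\aset_{s,a} = \aset^k_{s,a}$ yields, for each such $s$,
\[ \P_{P\opt}\Bigl[ \min_{p\in\aset^k_{s,a}} p\tr \hat{v}_k \le (p\opt_{s,a})\tr \hat{v}_k \;\Big|\; \dataset \Bigr] \ge 1 - \frac{\delta}{SA}~. \]
Since the map $x \mapsto r_{s,a} + \gamma x$ is order-preserving, this per-state inequality is precisely $(\RBU^{\hat\pi_k}_{\aset_k}\hat{v}_k)(s) \le (\BU^{\hat\pi_k}\hat{v}_k)(s)$, so the rewards play no role, as already noted before \cref{lem:optimal_set}.

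I would then glue the $S$ per-state events together by a union bound: they fail with total probability at most $S\cdot \delta/(SA) = \delta/A \le \delta$, so with probability at least $1-\delta$ the inequality $\RBU^{\hat\pi_k}_{\aset_k}\hat{v}_k \le \BU^{\hat\pi_k}\hat{v}_k$ holds simultaneously at every state. On that event the hypotheses of \cref{prop:single_to_many} are met with $\pi = \hat\pi_k$, robust value $\hat{v}_k$, and true value $v^{\hat\pi_k}_{P\opt}$, so the lemma delivers $\hat{v}_k \le v^{\hat\pi_k}_{P\opt}$ element-wise; multiplying by the nonnegative vector $p_0 \in \Delta^S$ preserves the inequality and gives $p_0\tr\hat{v}_k \le p_0\tr v^{\hat\pi_k}_{P\opt}$. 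As this conclusion holds on an event of probability at least $1-\delta$, the claimed safety bound follows. (Note only the policy's own actions are used, so unioning over $S$ rather than $SA$ already suffices, leaving slack $\delta/A$.)

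The step I expect to require the most care is the interface between the probabilistic per-state guarantees and the deterministic comparison lemma. \Cref{prop:single_to_many} is a \emph{sample-wise} statement, so it must be invoked on the intersection event where all the Bellman inequalities hold at once, not marginally or in expectation; the correctness of this hinges on $\hat{v}_k$ being a fixed function of $\dataset$ alone. If $\hat{v}_k$ depended on $P\opt$, the events $\{(\RBU^{\hat\pi_k}_{\aset_k}\hat{v}_k)(s) \le (\BU^{\hat\pi_k}\hat{v}_k)(s)\}$ would be coupled to the randomness and the union bound argument would not go through, so keeping this separation explicit is the crux of the proof.
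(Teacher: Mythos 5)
Your proof is correct and follows essentially the same route as the paper's: invoke the ``if'' direction of \cref{lem:optimal_set} at the terminal iterate using the termination condition, take a union bound, and conclude sample-wise via \cref{prop:single_to_many} using the fact that $\hat{v}_k = \hat{v}\opt_{\aset_k}$ is the robust fixed point of $\hat\pi_k$. The only difference is minor---you union over the $S$ states under the returned policy's actions rather than all $SA$ pairs (leaving slack $\delta/A$), and you make explicit the measure-theoretic point that $\hat{v}_k$ depends only on $\dataset$, both of which the paper leaves implicit.
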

	\begin{proof}
		Recall that \cref{alg:IAVF} terminates only if $\mathcal{K}_{s,a}(\hat{v}_{k}) \cap \aset_{s,a}^{k} \neq \emptyset$
		for each state $s$ and action $a$. Then, according to  \cref{lem:optimal_set}, we get with probability $1 - \delta/(SA)$:
		\[ \min_{p\in\aset_{s,a}^{k}} p\tr \hat{v}_k \le (p_{s,a}\opt)\tr \hat{v}_k  \]
		for any fixed state $s$ and action $a$. By the union bound, the inequality holds simultaneously for all states and actions with probability $1-\delta$. That means that with probability $1-\delta$ we can derive the following using basic algebra:
		\begin{align*}
		\min_{p\in\aset_{s,a}^{k}} p\tr \hat{v}_k &\le (p_{s,a}\opt)\tr \hat{v}_k  & \forall s\in\states, a\in\actions \\
		r_{s,a} + \min_{p\in\aset_{s,a}^{k}} p\tr \hat{v}_k &\le r_{s,a} + (p_{s,a}\opt)\tr \hat{v}_k  & \forall s\in\states, a\in\actions \\
		\RBU_{\aset^k}^{\hat{\pi}_k} \hat{v}_k &\le \BU_{P\opt}^{\hat{\pi}_k} \hat{v}_k
		\end{align*}
		Note that $\hat{v}_k$ is the robust value function for the policy $\hat{\pi}_k$ since $\hat{v}_k = \hat{v}_{\aset_k}\opt$ and $\hat{\pi}_k = \hat{\pi}_{\aset_k}\opt$. \Cref{prop:single_to_many} finally implies that $\hat{v}_k \le v^{\hat{\pi}_k}_{P\opt}$ with probability $1-\delta$.
	\end{proof}
	
	RSVF, as described in \cref{alg:IAVF}, is not guaranteed to terminate. To terminate after a specific number of iterations, the algorithm can simply fall back to the BCI sets for states and actions for which the termination condition is not satisfied. We suspect that numerous other improvements to the algorithm are possible. 
	
	\section{Why Not Confidence Regions} \label{sec:why}
	
	Constructing ambiguity sets from confidence regions seems intuitive and natural. It may be surprising that RSVF abandons this intuitive approach. In this section, we describe two reasons why confidence regions are unnecessarily conservative compared to RSVF sets.
	
	The first reason why confidence regions are too conservative is because they assume that the value function depends on the true model $P\opt$. To see this, consider the setting of \cref{exm:basic} with $r_{s_1,a_1} = 0$. When an ambiguity set $\aset_{s_1,a_1}$ is built as a confidence region such that $\P[p\opt_{s_1,a_1} \in \aset_{s_1,a_1}] \ge 1-\delta$, it  satisfies:
	\begin{equation*} 
	\P_{P\opt} \left[ \min_{p \in \aset_{s,a}} p\tr v \le  (p_{s,a}\opt)\tr v, \; \forall v\in\Real^S  ~\middle|~ \dataset \right] \ge 1-\delta.
	\end{equation*}
	Notice the value function inside of the probability operator. \Cref{prop:single_to_many} shows that this guarantee is needlessly strong. It is, instead, sufficient that the inequality \eqref{eq:single_to_many} holds just for $\hat{v}^\pi$ which is independent of $P\opt$ in the Bayesian setting. The following weaker condition is sufficient to guarantee safety:
	\begin{equation} \label{eq:restricted_requirement}
	\P_{P\opt} \left[ \min_{p \in \aset_{s,a}} p\tr v \le  (p_{s,a}\opt)\tr v ~\middle|~ \dataset \right] \ge 1-\delta, \; \forall v\in\Real^S
	\end{equation}
	Notice that $v$ is outside of the probability operator. This set is smaller and provides the same guarantees, but may be more difficult to construct~\citep{Gupta2015}.
	
	The second reason why confidence regions are too conservative is because they construct a uniform lower bound for all policies $\pi$ as is apparent in \cref{cor:bci_bound}. This is unnecessary, again, as \cref{prop:single_to_many} shows. The robust Bellman update only needs to lower bound the Bellman update for the computed value function $\hat{v}^\pi$, not for all value functions. As a result, \eqref{eq:restricted_requirement}, can be further relaxed to:
	\begin{equation} \label{eq:rsvf_requirement}
	\P_{P\opt} \left[ \min_{p \in \aset_{s,a}} p\tr \hat{v}^{\pi_R} \le  (p_{s,a}\opt)\tr \hat{v}^{\pi_R} ~\middle|~ \dataset \right] \ge 1-\delta, 
	\end{equation}
	where $\pi_R$ is the optimal solution to the robust MDP. RSVF is less conservative because it constructs ambiguity sets that satisfy the weaker requirement of \eqref{eq:rsvf_requirement} rather than confidence regions. Deeper theoretical analysis of the benefits of using RSVF sets is very important but is beyond the scope of this work. Examples that show the benefits to be arbitrarily large or small can be constructed readily by properly choosing the priors over probability distributions.    
	
	\section{Empirical Evaluation} \label{sec:experiments}
	
	In this section, we empirically evaluate the safe estimates computed using Hoeffding, BCI, and RSVF ambiguity sets. We start by assuming a true model and generate simulated datasets from it. Each dataset is then used to construct an ambiguity set and a safe estimate of policy return. The performance of the methods is measured using the average of the absolute errors of the estimates compared with the true returns of the \emph{optimal} policies. All of our experiments use a 95\% confidence for the safety of the estimates.
	
	We compare ambiguity sets constructed using BCI, RSVF, with the Hoeffding sets. To reduce the conservativeness of Hoeffding sets when transition probabilities are sparse, we use a modification inspired by the Good-Turing bounds~\citep{Taleghan2015}. The modification is to assume that any transitions from $s,a$ to $s'$ are impossible if they are missing in the dataset $\dataset$. We also compare with the ``Hoeffding Monotone'' formulation $\asetht$ even when there is no guarantee that the value function is really monotone. This helps us to quantify the limitations of using concentration inequalities. Finally, we compare the results with the ``Mean Transition'' which solves the expected model $\bar{p}_{s,a}$ and provides no safety guarantees.
	
	\begin{figure}
		\centering
		\includegraphics[width=0.7\linewidth]{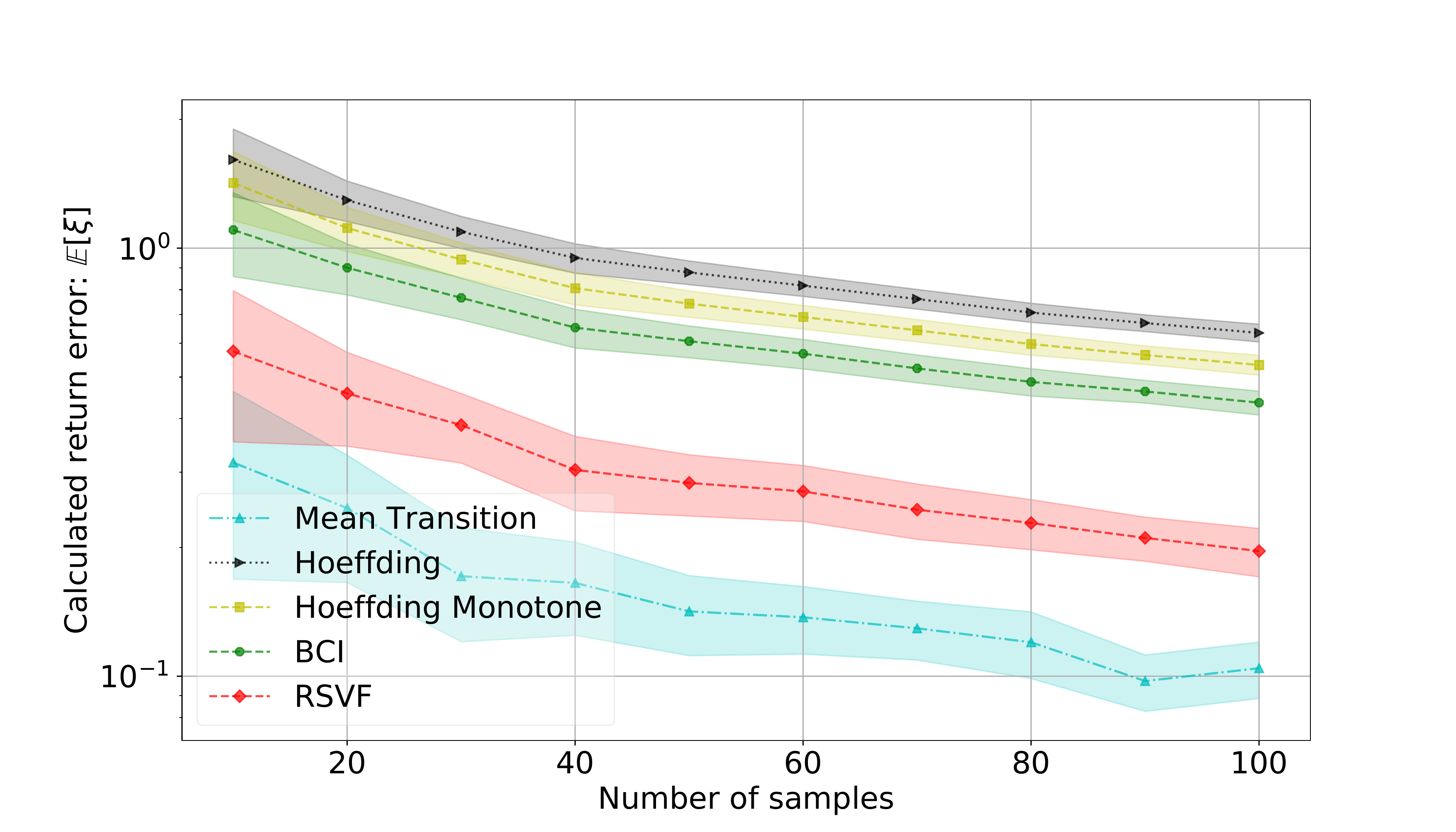}
		\caption{Expected regret of safe estimates with $95\%$ confidence regions for the Bellman update with an uninformative prior.}    
		\label{fig:single_state_dirichlet_errors}
	\end{figure}
	
	
	
	We do not evaluate the computational complexity of the methods since they target problems constrained by data and not computation. The Bayesian methods are generally more computationally demanding but the scale depends significantly on the type of the prior model used. All Bayesian methods draw $1,000$ samples from the posterior for each state and action.
	
	\subsection{Bellman Update} \label{subsec:single_state}
	
	In this section, we consider a transition from a single state $s_0$ and action $a_0$ to 5 states $s_1, \ldots, s_5$. The value function for the states $s_1, \ldots, s_5$ is fixed to be $[1,2,3,4,5]$. RSVF is run for a single iteration with the given value function. The single iteration of RSVF in this simplistic setting helps to quantify the possible benefit of using RSVF-style methods over BCI. The ground truth is generated from the corresponding prior for each one of the problems.
	
	\paragraph{Uninformative Dirichlet Priors}

	\begin{figure}
		\centering
		\includegraphics[width=0.6\linewidth]{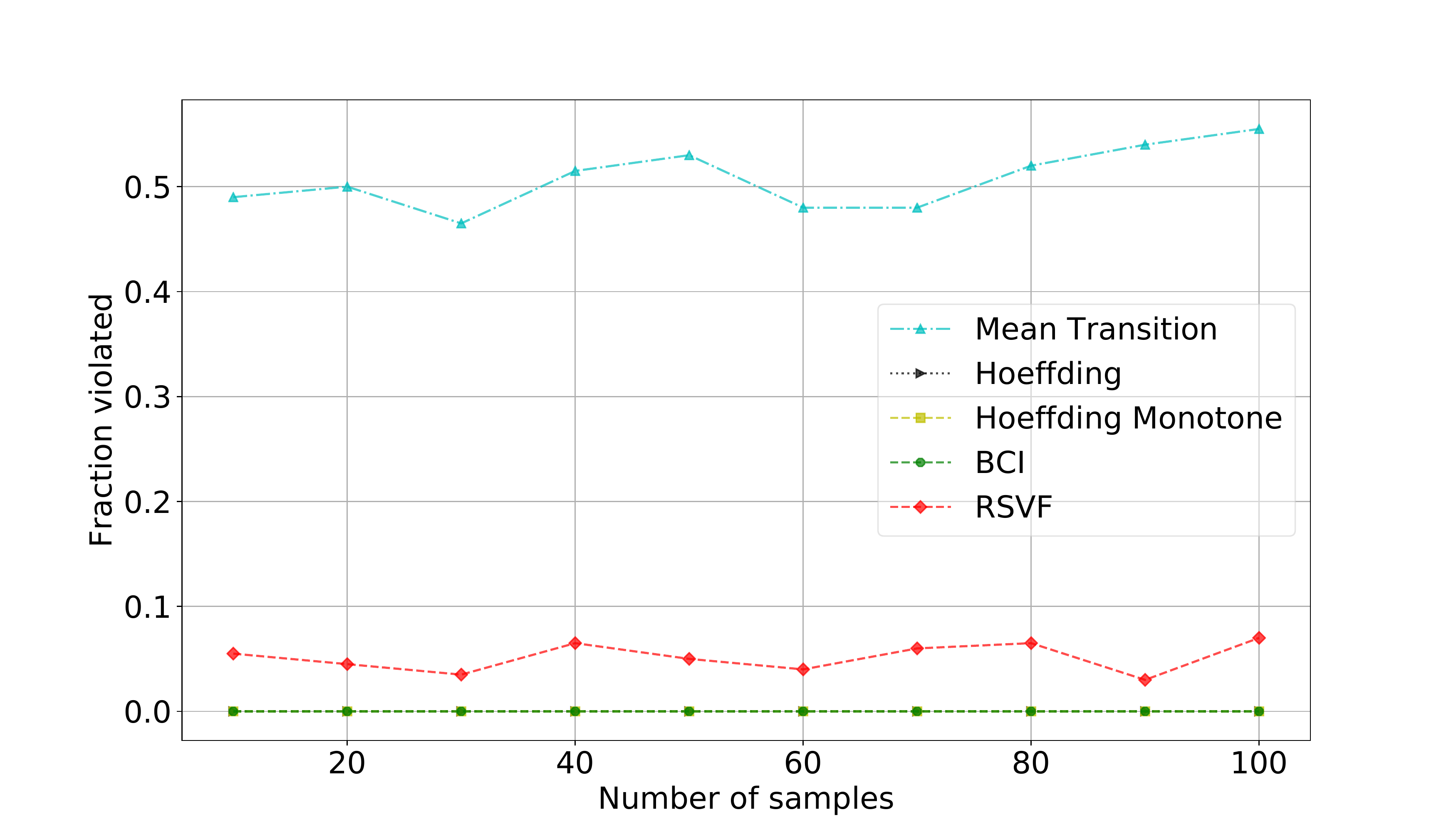}
		\caption{Rate of violations of the safety requirement for the single-state estimation with a uniform Dirichlet prior.}
		\label{fig:single_state_dirichlet_violations}
	\end{figure} 
	
		\begin{figure}
		\centering
		\includegraphics[width=0.6\linewidth]{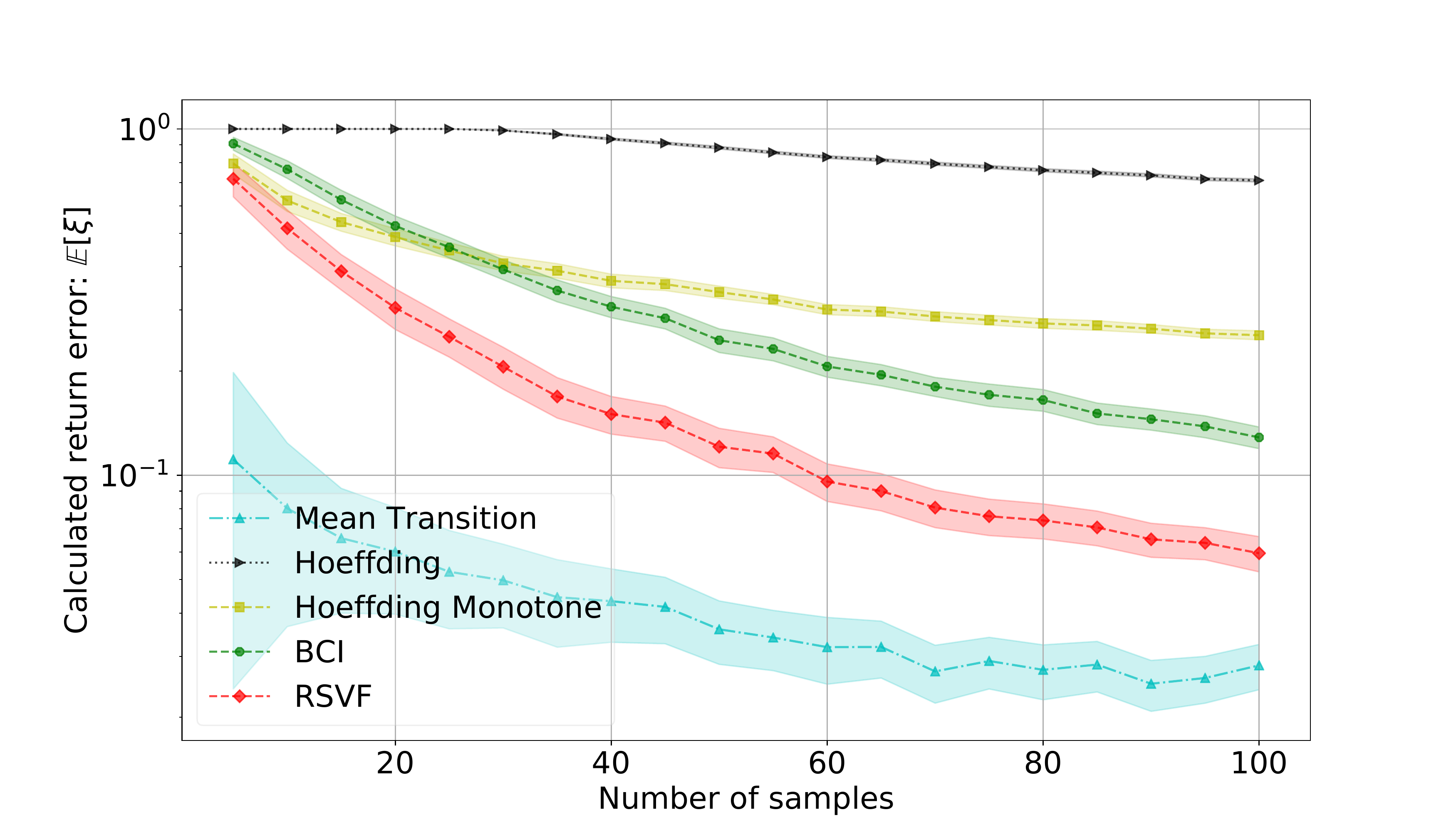}
		\caption{Expected regret of safe estimates with $95\%$ confidence regions for the Bellman update with an informative prior.}
		\label{fig:single_state_normal_errors}
	\end{figure}

	This setting considers a uniform Dirichlet distribution with $\alpha = [1,1,1,1,1]$ as the prior. This prior provides little information. \Cref{fig:single_state_dirichlet_errors} compares the computed robust return errors. The value $\xi$ represents the regret of predicted returns, which is the absolute difference between the \emph{true} optimal value and the robust estimate: $\xi = \abs{\rho(\pi\opt_{P\opt}, P\opt) - \tilde\rho(\hat\pi\opt)}$. Here, $\tilde{\rho}$ is the robust estimate and $\hat\pi\opt$ is the optimal robust solution. The smaller the value, the tighter and less conservative the safe estimate is. \Cref{fig:single_state_dirichlet_violations} shows the rate of safety violations: $\P_{\dataset}[\tilde\rho(\hat{\pi}\opt) > \rho(\hat{\pi}\opt, P\opt) ~|~ P\opt]$. The number of samples is the size of dataset $\dataset$. All results are computed by averaging over 200 simulated datasets of the given size generated from the ground-truth $P\opt$.

	The results show that BCI improves on both types Hoeffding bounds and RSVF further improves on BCI. The mean estimate provides the tightest bounds, but \cref{fig:single_state_dirichlet_violations} demonstrates that it does not provide any meaningful safety guarantees. \Cref{fig:single_state_dirichlet_violations} also provides insights into how RSVF improves on the other methods. Because the goal is to guarantee estimates are computed with 95\% confidence, one would expect the safety guarantees to be violated about $5\%$ of the time. BCI and Hoeffding solutions violate the safety requirements $0\%$ of the time. RSVF is optimal in this setting and achieves the desired $5\%$ violation.
	
	\begin{figure}
		\centering
		\includegraphics[width=0.6\linewidth]{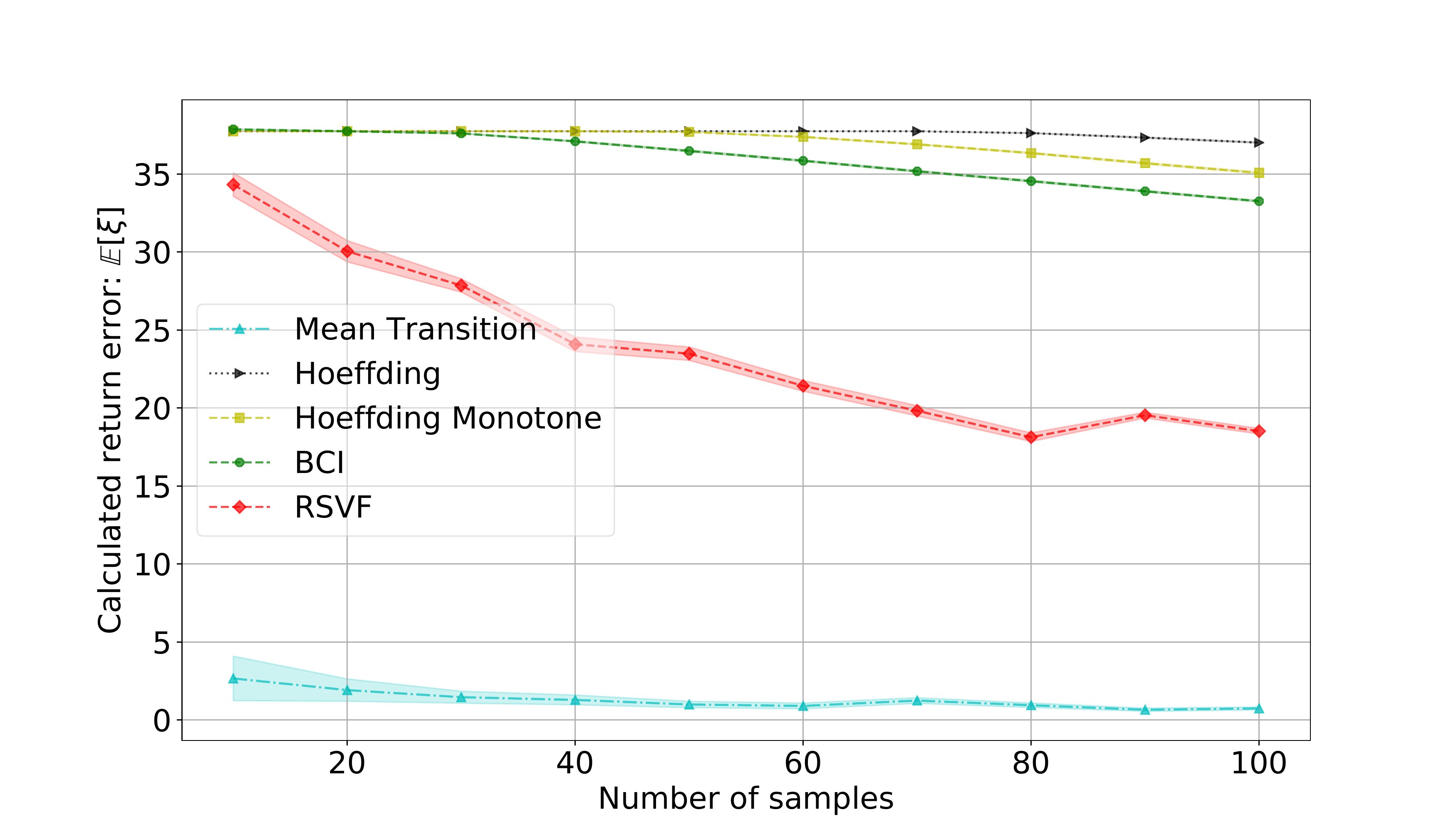}
		\caption{Expected regret of safe estimates with $95\%$ confidence regions for the RiverSwim: an MDP with an uninformative prior.}
		\label{fig:swimmer_approxim}
	\end{figure}
	
	\paragraph{Informative Gaussian Priors}     
	To evaluate the effect of using an informative prior, we use a problem inspired by inventory optimization. The states $s_1,\ldots,s_5$ represent inventory levels. The inventory level corresponds to the state index ($1$ in the state $s_1$) except that the inventory in the current state $s_0$ is $5$. The demand is assumed to be Normally distributed with an unknown mean $\mu$ and a \emph{known} standard deviation $\sigma = 1$. The prior over $\mu$ is Normal with the mean $\mu_0 = 3$ and, therefore, the posterior over $\mu$ is also Normal. The current action assumes that no product is ordered and, therefore, only the demand is subtracted from $s_0$. 
	
	\Cref{fig:single_state_normal_errors} compares the regret of safe estimates which were generated identically to the uninformative example. It shows that with an informative prior, BCI performs significantly better than Hoeffding bounds. RSVF provides still tighter bounds than BCI. The violations plot (not shown) is almost identical to \cref{fig:single_state_dirichlet_violations}.
	
	\subsection{Full MDP} \label{subsec:full_mdp}
	
	In this section, we evaluate the methods using MDPs with relatively small state-spaces. They can be used with certain types of value function approximation, like aggregation~\citep{Petrik2014}, but we evaluate them only on tabular problems to prevent approximation errors from skewing the results. To prevent the sampling policy from influencing the results, each dataset $\dataset$ has the same number of samples from each state.
	
	\paragraph{Uninformative Prior} We first use the standard RiverSwim domain for the evaluation~\citep{Strehl2008}. The methods are evaluated identically to the Bellman update above. That is, we generate synthetic datasets from the ground truth and then compare expected regret of the robust estimate with respect to the true return of the \emph{optimal} policy for the ground truth. As the prior, we use the uniform Dirichlet distribution over all states. \Cref{fig:swimmer_approxim} shows the expected robust regret over $100$ repetitions. The x-axis represents the number of samples in $\dataset$ for each state. It is apparent that BCI improves only slightly on the Hoeffding sets since the prior is not informative. RSVF, on the other hand, shows a significant improvement over BCI. All robust methods have safety violations of $0\%$ indicating that even RSVF is unnecessarily conservative here.
	
	\begin{figure}
		\centering
		\includegraphics[width=0.6\linewidth]{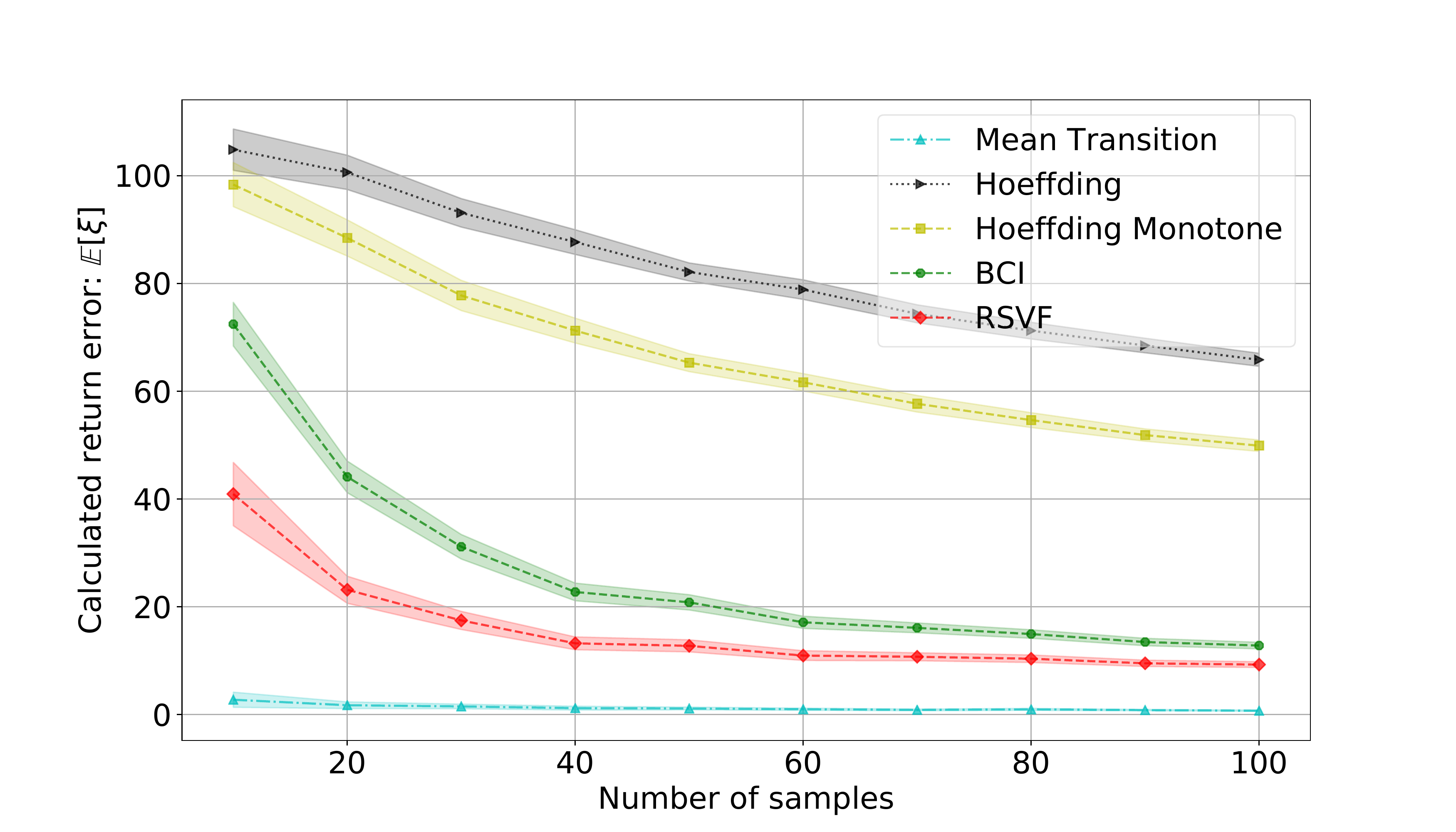}
		\caption{Expected regret of safe estimates with $90\%$ confidence regions for the ExpPopulation: an MDP with an informative prior.}
		\label{fig:population_approxim}
	\end{figure}
	\paragraph{Informative Prior} Next, we evaluate RSVF on the MDP model of a simple exponential population model~\citep{Tirinzoni2018}. Robustness plays an important role in ecological models because they are often complex, stochastic, and data collection is expensive. Yet, it is important that the decisions are robust due to their long term impacts.
	
	We only outline the population model here and refer the interested reader to \citeasnoun{Tirinzoni2018} for more details. The population $N_t$ of a species at time $t$ evolves according to the exponential dynamics $N_{t+1} = \min{(\lambda_t N_t, K)}$.  Here, $\lambda$ is the growth rate and $K$ is the carrying capacity of the environment. A manager must decide, at each time $t$, whether to apply a treatment that reduces the growth rate $\lambda$. The growth rate $\lambda_t$ is defined as:    $\lambda_t = \bar{\lambda} - z_t N_t\beta_1 - z_t\max{(0, N_t-\bar{N})^2}\beta_2 + \mathcal{N}(0,\sigma_y^2)$, where $\beta_1$ and $\beta_2$ are the coefficients of treatment effectiveness and $z_t$ is the indicator of treatment. A noisy estimate $y_t$ of the population $N_t$ is observed: $y_t \sim N_t + \mathcal{N}(0,\sigma_y^2)$. The state in the MDP is the population $y_t$ discretized to $20$ values. There are two actions whether to apply the treatment. The rewards capture the costs of high population and the treatment application. The exponential growth model is used as the prior and all priors and posteriors are Normally distributed.
	
	
	\Cref{fig:population_approxim} shows the average regret of the safe predictions. BCI can leverage the prior information to compute tighter bounds, but RSVF further improves on BCI. The rate of safety violations is again $0\%$ for all robust methods. 
	
	
	\section{Summary and Conclusion}
	
	This paper proposes new Bayesian algorithms for constructing ambiguity sets in RMDPs, improving over standard distribution-free methods. BCI makes it possible to flexibly incorporate prior domains knowledge and is easy to generalize to other shapes of ambiguity sets (like $L_2$) without having to prove new concentration inequalities. Finally, RSVF improves on BCI by constructing tighter ambiguity sets that are not confidence regions. Our experimental results and theoretical analysis indicate that the new ambiguity sets provide much tighter safe return estimates. The only drawbacks of the Bayesian methods are that they need priors and may increase the computational complexity.
	
	RSVF can be improved in several ways. Our experimental results show that the method is still too conservative since it has no safety violations. Generalizing beyond $L_1$ sets and rectangularity are likely to mitigate the conservativeness. The distribution-free ambiguity sets can probably be tightened by using the Bernstein inequality.
	
	\section*{Acknowledgments}
	
	We thank Vishal Gupta for enlightening comments on the topic of Bayesian ambiguity sets for distributionally robust optimization.
	
	\bibliographystyle{icml}
	\bibliography{marek}

\begin{thebibliography}{43}
\providecommand{\natexlab}[1]{#1}
\providecommand{\url}[1]{\texttt{#1}}
\expandafter\ifx\csname urlstyle\endcsname\relax
  \providecommand{\doi}[1]{doi: #1}\else
  \providecommand{\doi}{doi: \begingroup \urlstyle{rm}\Url}\fi

\bibitem[Auer et~al.(2010)Auer, Jaksch, and Ortner]{Auer2010a}
Auer, P., Jaksch, T., and Ortner, R.
\newblock {Near-optimal regret bounds for reinforcement learning}.
\newblock \emph{Journal of Machine Learning Research}, 11\penalty0
  (1):\penalty0 1563--1600, 2010.

\bibitem[Bagnell et~al.(2001)Bagnell, Ng, and Schneider]{Bagnell2001b}
Bagnell, J.~A., Ng, A.~Y., and Schneider, J.~G.
\newblock {Solving Uncertain Markov Decision Processes}.
\newblock \emph{Carnegie Mellon Research Showcase}, pp.\  948--957, 2001.

\bibitem[Ben-Tal et~al.(2009)Ben-Tal, {El Ghaoui}, and Nemirovski]{Ben-Tal2009}
Ben-Tal, A., {El Ghaoui}, L., and Nemirovski, A.
\newblock \emph{{Robust Optimization}}.
\newblock Princeton University Press, 2009.

\bibitem[Bertsekas \& Tsitsiklis(1996)Bertsekas and Tsitsiklis]{Bertsekas1996}
Bertsekas, D.~P. and Tsitsiklis, J.~N.
\newblock \emph{{Neuro-dynamic programming}}.
\newblock 1996.

\bibitem[Bertsimas et~al.(2017)Bertsimas, Kallus, and Gupta]{Bertsimas2017}
Bertsimas, D., Kallus, N., and Gupta, V.
\newblock \emph{{Data-driven robust optimization}}.
\newblock Springer Berlin Heidelberg, 2017.

\bibitem[Delgado et~al.(2016)Delgado, {De Barros}, Dias, and
  Sanner]{Delgado2016}
Delgado, K.~V., {De Barros}, L.~N., Dias, D.~B., and Sanner, S.
\newblock {Real-time dynamic programming for Markov decision processes with
  imprecise probabilities}.
\newblock \emph{Artificial Intelligence}, 230:\penalty0 192--223, 2016.

\bibitem[Dietterich et~al.(2013)Dietterich, Taleghan, and
  Crowley]{Dietterich2013}
Dietterich, T., Taleghan, M., and Crowley, M.
\newblock {PAC optimal planning for invasive species management: Improved
  exploration for reinforcement learning from simulator-defined MDPs.}
\newblock \emph{AAAI}, 2013.

\bibitem[Gelman et~al.(2014)Gelman, Carlin, Stern, and Rubin]{Gelman2014}
Gelman, A., Carlin, J.~B., Stern, H.~S., and Rubin, D.~B.
\newblock \emph{{Bayesian Data Analysis}}.
\newblock Chapman and Hall/CRC, 3rd edition, 2014.

\bibitem[Goyal \& Grand-Clement(2018)Goyal and Grand-Clement]{Goyal2018}
Goyal, V. and Grand-Clement, J.
\newblock {Robust Markov Decision Process: Beyond Rectangularity}.
\newblock Technical report, 2018.

\bibitem[Gupta(2015)]{Gupta2015}
Gupta, V.
\newblock {Near-Optimal Bayesian Ambiguity Sets for Distributionally Robust
  Optimization}.
\newblock 2015.

\bibitem[Hanasusanto \& Kuhn(2013)Hanasusanto and Kuhn]{Hanasusanto2013}
Hanasusanto, G. and Kuhn, D.
\newblock {Robust Data-Driven Dynamic Programming}.
\newblock In \emph{Advances in Neural Information Processing Systems (NIPS)},
  2013.

\bibitem[Ho et~al.(2018)Ho, Petrik, and Wiesemann]{Ho2018}
Ho, C.~P., Petrik, M., and Wiesemann, W.
\newblock {Fast Bellman Updates for Robust MDPs}.
\newblock In \emph{International Conference on Machine Learning (ICML)},
  volume~80, pp.\  1979--1988, 2018.

\bibitem[Iyengar(2005)]{Iyengar2005}
Iyengar, G.~N.
\newblock {Robust dynamic programming}.
\newblock \emph{Mathematics of Operations Research}, 30\penalty0 (2):\penalty0
  257--280, 2005.

\bibitem[Jaksch et~al.(2010)Jaksch, Ortner, and Auer]{Auer2010}
Jaksch, T., Ortner, R., and Auer, P.
\newblock {Near-optimal Regret Bounds for Reinforcement Learning}.
\newblock \emph{Journal of Machine Learning Research}, 11\penalty0
  (1):\penalty0 1563--1600, 2010.

\bibitem[Jiang \& Li(2015)Jiang and Li]{Jiang2015b}
Jiang, N. and Li, L.
\newblock {Doubly Robust Off-policy Value Evaluation for Reinforcement
  Learning}.
\newblock In \emph{International Conference on Machine Learning (ICML)}, 2015.

\bibitem[Kalyanasundaram et~al.(2002)Kalyanasundaram, Chong, and
  Shroff]{Kalyanasundaram2002}
Kalyanasundaram, S., Chong, E. K.~P., and Shroff, N.~B.
\newblock {Markov decision processes with uncertain transition rates:
  Sensitivity and robust control}.
\newblock In \emph{IEEE Conference on Decision and Control}, pp.\  3799--3804,
  2002.

\bibitem[Lange et~al.(2012)Lange, Gabel, and Riedmiller]{Lange2012}
Lange, S., Gabel, T., and Riedmiller, M.
\newblock {Batch Reinforcement Learning}.
\newblock In \emph{Reinforcement Learning}, pp.\  45--73. 2012.

\bibitem[Laroche \& Trichelair(2018)Laroche and Trichelair]{Laroche2017}
Laroche, R. and Trichelair, P.
\newblock {Safe Policy Improvement with Baseline Bootstrapping}, 2018.

\bibitem[{Le Tallec}(2007)]{LeTallec2007}
{Le Tallec}, Y.
\newblock \emph{{Robust, Risk-Sensitive, and Data-driven Control of Markov
  Decision Processes}}.
\newblock PhD thesis, MIT, 2007.

\bibitem[Li et~al.(2015)Li, Munos, and Szepesv{\'{a}}ri]{Li2015}
Li, L., Munos, R., and Szepesv{\'{a}}ri, C.
\newblock {Toward Minimax Off-policy Value Estimation}.
\newblock In \emph{International Conference on Artificial Intelligence and
  Statistics (AISTATS)}, 2015.

\bibitem[Lim et~al.(2013)Lim, Xu, and Mannor]{Lim2013}
Lim, S.~H., Xu, H., and Mannor, S.
\newblock {Reinforcement Learning in Robust Markov Decision Processes}.
\newblock In \emph{Advances in Neural Information Processing Systems (NIPS)},
  2013.

\bibitem[Mannor et~al.(2012)Mannor, Mebel, and Xu]{Mannor2012}
Mannor, S., Mebel, O., and Xu, H.
\newblock {Lightning does not strike twice: Robust MDPs with coupled
  uncertainty}.
\newblock In \emph{International Conference on Machine Learning (ICML)}, 2012.

\bibitem[Mannor et~al.(2016)Mannor, Mebel, and Xu]{Mannor2016}
Mannor, S., Mebel, O., and Xu, H.
\newblock {Robust MDPs with k-rectangular uncertainty}.
\newblock \emph{Mathematics of Operations Research}, 41\penalty0 (4):\penalty0
  1484--1509, 2016.

\bibitem[Munos et~al.(2016)Munos, Stepleton, Harutyunyan, and
  Bellemare]{Munos2016}
Munos, R., Stepleton, T., Harutyunyan, A., and Bellemare, M.~G.
\newblock {Safe and Efficient Off-Policy Reinforcement Learning}.
\newblock In \emph{Conference on Neural Information Processing Systems (NIPS)},
  2016.

\bibitem[Murphy(2012)]{Murphy2012}
Murphy, K.
\newblock \emph{{Machine Learning: A Probabilistic Perspective}}.
\newblock 2012.

\bibitem[Nilim \& {El Ghaoui}(2005)Nilim and {El Ghaoui}]{Nilim2005}
Nilim, A. and {El Ghaoui}, L.
\newblock {Robust control of Markov decision processes with uncertain
  transition matrices}.
\newblock \emph{Operations Research}, 53\penalty0 (5):\penalty0 780--798, 2005.

\bibitem[Petrik(2012)]{Petrik2012}
Petrik, M.
\newblock {Approximate dynamic programming by minimizing distributionally
  robust bounds}.
\newblock In \emph{International Conference of Machine Learning (ICML)}, 2012.

\bibitem[Petrik \& Subramanian(2014)Petrik and Subramanian]{Petrik2014}
Petrik, M. and Subramanian, D.
\newblock {RAAM : The benefits of robustness in approximating aggregated MDPs
  in reinforcement learning}.
\newblock In \emph{Neural Information Processing Systems (NIPS)}, 2014.

\bibitem[Petrik et~al.(2016)Petrik, {Mohammad Ghavamzadeh}, and
  Chow]{Petrik2016a}
Petrik, M., {Mohammad Ghavamzadeh}, and Chow, Y.
\newblock {Safe Policy Improvement by Minimizing Robust Baseline Regret}.
\newblock In \emph{Advances in Neural Information Processing Systems (NIPS)},
  2016.

\bibitem[Puterman(2005)]{Puterman2005}
Puterman, M.~L.
\newblock \emph{{Markov decision processes: Discrete stochastic dynamic
  programming}}.
\newblock 2005.

\bibitem[Shapiro et~al.(2014)Shapiro, Dentcheva, and Ruszczynski]{Shapiro2014}
Shapiro, A., Dentcheva, D., and Ruszczynski, A.
\newblock \emph{{Lectures on stochastic programming: Modeling and theory}}.
\newblock 2014.

\bibitem[Strehl \& Littman(2008)Strehl and Littman]{Strehl2008}
Strehl, A. and Littman, M.
\newblock {An analysis of model-based Interval Estimation for Markov Decision
  Processes}.
\newblock \emph{Journal of Computer and System Sciences}, 74:\penalty0
  1309--1331, 2008.

\bibitem[Strehl(2007)]{Strehl2008a}
Strehl, A.~L.
\newblock \emph{{Probably Approximately Correct (PAC) Exploration in
  Reinforcement Learning}}.
\newblock PhD thesis, Rutgers University, 2007.

\bibitem[Sutton \& Barto(1998)Sutton and Barto]{Sutton1998}
Sutton, R.~S. and Barto, A.
\newblock \emph{{Reinforcement learning}}.
\newblock 1998.

\bibitem[Taleghan et~al.(2015)Taleghan, Dietterich, Crowley, Hall, and
  Albers]{Taleghan2015}
Taleghan, M.~A., Dietterich, T.~G., Crowley, M., Hall, K., and Albers, H.~J.
\newblock {PAC Optimal MDP Planning with Application to Invasive Species
  Management}.
\newblock \emph{Journal of Machine Learning Research}, 16:\penalty0 3877--3903,
  2015.

\bibitem[Tamar et~al.(2014)Tamar, Mannor, and Xu]{Tamar2014a}
Tamar, A., Mannor, S., and Xu, H.
\newblock {Scaling up Robust MDPs Using Function Approximation}.
\newblock In \emph{International Conference of Machine Learning (ICML)}, 2014.

\bibitem[Thomas \& Brunskill(2016)Thomas and Brunskill]{Thomas2016}
Thomas, P.~S. and Brunskill, E.
\newblock {Data-efficient off-policy policy evaluation for reinforcement
  learning}.
\newblock In \emph{International Conference of Machine Learning (ICML)}, 2016.

\bibitem[Thomas et~al.(2015)Thomas, Teocharous, and Ghavamzadeh]{Thomas2015}
Thomas, P.~S., Teocharous, G., and Ghavamzadeh, M.
\newblock {High Confidence Off-Policy Evaluation}.
\newblock In \emph{Annual Conference of the AAAI}, 2015.

\bibitem[Tirinzoni et~al.(2018)Tirinzoni, Milano, Chen, and
  Ziebart]{Tirinzoni2018}
Tirinzoni, A., Milano, P., Chen, X., and Ziebart, B.~D.
\newblock {Policy-Conditioned Uncertainty Sets for Robust Markov Decision
  Processes}.
\newblock In \emph{Neural Information Processing Systems (NIPS)}, 2018.

\bibitem[Weissman et~al.(2003)Weissman, Ordentlich, Seroussi, Verdu, and
  Weinberger]{Weissman2003xx}
Weissman, T., Ordentlich, E., Seroussi, G., Verdu, S., and Weinberger, M.~J.
\newblock {Inequalities for the L1 deviation of the empirical distribution}.
\newblock 2003.

\bibitem[Wiesemann et~al.(2013)Wiesemann, Kuhn, and Rustem]{Wiesemann2013}
Wiesemann, W., Kuhn, D., and Rustem, B.
\newblock {Robust Markov decision processes}.
\newblock \emph{Mathematics of Operations Research}, 38\penalty0 (1):\penalty0
  153--183, 2013.

\bibitem[Xu \& Mannor(2006)Xu and Mannor]{Xu2006}
Xu, H. and Mannor, S.
\newblock {The robustness-performance tradeoff in Markov decision processes}.
\newblock \emph{Advances in Neural Information Processing Systems (NIPS)},
  2006.

\bibitem[Xu \& Mannor(2009)Xu and Mannor]{Xu2009}
Xu, H. and Mannor, S.
\newblock {Parametric regret in uncertain Markov decision processes}.
\newblock In \emph{IEEE Conference on Decision and Control (CDC)}, pp.\
  3606--3613, 2009.

\end{thebibliography}
	
	\newpage
	\appendix
	\onecolumn
	
	\section{Technical Results}
	
	The following proposition shows that the guarantee of a safe estimate on the return is achieved when the true transition model is contained in the ambiguity set.
	\begin{lemma} \label{prop:confidence_interval_freq}
		Suppose that an ambiguity set $\aset$ satisfies $\P_{\dataset}\left[ p_{s,a}\opt \in \aset_{s,a} ~\vert~ P\opt \right] \ge 1-\delta/(SA)$ for each state $s$ and action $a$. Then:
		\[
		\P_{\dataset} \left[\hat{v}^\pi_\aset \le v_{P\opt}^{\pi}, \; \forall \pi\in\Pi ~|~ P\opt \right] \geq 1-\delta~.
		\]
	\end{lemma}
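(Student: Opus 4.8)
The plan is to isolate a single, policy-independent ``good event'' on which the desired inequality holds for every policy simultaneously, and then to lower-bound the probability of that event by a union bound over state-action pairs. Define the event
$E = \{\, p\opt_{s,a} \in \aset_{s,a} \text{ for all } s\in\states,\, a\in\actions \,\}$.
By hypothesis each of the $SA$ inclusions fails with probability at most $\delta/(SA)$, so the union bound gives $\P_\dataset[E \mid P\opt] \ge 1 - SA\cdot \delta/(SA) = 1-\delta$. Note the union is taken over state-action pairs, not over policies.

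The next step is to show that $E$ implies the conclusion deterministically. Fix any policy $\pi\in\Pi$ and work on the event $E$. For each state $s$, taking $a = \pi(s)$, the inclusion $p\opt_{s,a}\in\aset_{s,a}$ immediately yields $\min_{p\in\aset_{s,a}} p\tr \hat{v}^\pi_\aset \le (p\opt_{s,a})\tr \hat{v}^\pi_\aset$, which is precisely the restatement \eqref{eq:single_to_many} of the premise of \cref{prop:single_to_many}. Hence $\RBU^\pi_\aset \hat{v}^\pi_\aset \le \BU^\pi_{P\opt} \hat{v}^\pi_\aset$, and \cref{prop:single_to_many} then gives $\hat{v}^\pi_\aset \le v^\pi_{P\opt}$ element-wise.

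The crucial observation---and the only point that genuinely needs care---is that the event $E$ does not depend on $\pi$: it constrains the ambiguity sets for all state-action pairs at once. Consequently the argument of the previous paragraph applies verbatim to every $\pi\in\Pi$ using the \emph{same} realization in $E$, so on $E$ we have $\hat{v}^\pi_\aset \le v^\pi_{P\opt}$ for all $\pi$ simultaneously. Combining this with $\P_\dataset[E \mid P\opt] \ge 1-\delta$ delivers the stated bound. I expect the main conceptual obstacle to be resisting the temptation to union-bound over the $A^S$ policies; the whole point of the argument is that model containment is a policy-free sufficient condition, so the union bound need only range over the $SA$ state-action pairs, and the ``$\forall\pi$'' can be pulled inside the probability for free.
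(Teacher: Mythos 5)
Your proof is correct and follows essentially the same route as the paper's: condition on the policy-independent event that $p\opt_{s,a}\in\aset_{s,a}$ for all $s,a$ (union-bounded over the $SA$ state-action pairs), observe that on this event the robust Bellman update lower-bounds the true one for every policy, and invoke \cref{prop:single_to_many}. The paper packages the same idea through a law-of-total-probability decomposition rather than an explicit ``good event,'' but the substance---including the key point that the union bound ranges over state-action pairs, not policies---is identical.
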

	\begin{proof}
		We omit $\aset$ and $P\opt$ from the notation in the proof since they are fixed.
		From \cref{prop:single_to_many}, we have that $\hat{v}^\pi \le v^{\pi}$ if 
		\[ \RBU^\pi \hat{v}^\pi \le \BU^\pi \hat{v}^\pi~. \]
		That is, for each state $s$ and action $a$:
		\[ \min_{p \in \aset_{s,a}} p\tr \hat{v}^\pi \le (p\opt_{s,a})\tr \hat{v}^\pi.  \]
		Using the identity above, the probability that the robust value function is a lower bound can be bounded as follows:
		\begin{gather*}
		\P_{\dataset} \left[\hat{v}^\pi_\aset \le v_P^{\pi}, \; \forall \pi\in\Pi ~|~ P\opt \right] = \P_{\dataset} \left[\min_{p \in \aset_{s,a}} p\tr \hat{v}^\pi \le (p\opt_{s,a})\tr  \hat{v}^\pi, \; \forall \pi\in\Pi, s\in\states,a\in\actions ~|~ P\opt \right] \ge \\
		\ge \P_{\dataset} \left[(p_{s,a}\opt)\tr \hat{v}^\pi \le (p\opt_{s,a})\tr  \hat{v}^\pi, \; \forall \pi\in\Pi, s\in\states,a\in\actions ~|~ P\opt\in \aset,  P\opt \right] \P_{\dataset} \left[P\opt\in \aset ~|~  P\opt  \right] + \\
		+ \P_{\dataset} \left[P\opt\notin \aset ~|~  P\opt  \right] \ge
		1\, \P_{\dataset} \left[P\opt\in \aset ~|~  P\opt  \right] + 0\, \P_{\dataset} \left[P\opt\notin \aset ~|~  P\opt  \right] \ge\\
		\ge \P_{\dataset} \left[P\opt\in \aset ~|~  P\opt  \right]~.
		\end{gather*}
		Now, from the union bound over all states and actions, we get:
		\[ \P_{\dataset} \left[  \hat{v}^\pi > v^{\pi} | P\opt \right] \le \P_{\dataset} \left[P\opt\notin \aset ~|~  P\opt  \right] \le \sum_{s\in\states} \sum_{a\in\actions} \P_{\dataset} \left[ p\opt_{s,a} \notin \aset_{s,a} ~|~ P\opt \right] \le \delta~, \]
		which completes the proof.
	\end{proof}
	
	The next proposition is the Bayesian equivalent of \cref{prop:confidence_interval_freq}.
	\begin{lemma} \label{prop:confidence_interval}
		Suppose that an ambiguity set $\aset$ satisfies $\P_{P\opt}\left[ p_{s,a}\opt \in \aset_{s,a} ~\vert~ \dataset \right] \ge 1-\delta/(SA)$ for each state $s$ and action $a$. Then:
		\[
		\P_{P\opt} \left[\hat{v}^\pi_\aset \le v_{P\opt}^{\pi}, \; \forall \pi\in\Pi ~|~ \dataset \right] \geq 1-\delta~.
		\]
	\end{lemma}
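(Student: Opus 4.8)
The plan is to mirror the frequentist argument of \cref{prop:confidence_interval_freq}, simply exchanging the roles of the random object and the conditioning event. Here I would condition on the dataset $\dataset$, so that the ambiguity set $\aset$ and hence the robust value function $\hat{v}^\pi$ are \emph{deterministic} given $\dataset$; the only randomness that remains is in $P\opt$, and therefore in $v^\pi_{P\opt}$ and in the events $\{p\opt_{s,a}\in\aset_{s,a}\}$. As in the frequentist proof, I would suppress the subscript $\aset$ throughout.

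The key reduction is to show that the containment event $\{p\opt_{s,a}\in\aset_{s,a}\ \text{for all } s,a\}$ implies the target event $\{\hat{v}^\pi \le v^\pi_{P\opt}\ \text{for all } \pi\in\Pi\}$. By \cref{prop:single_to_many}, it suffices to verify, for each $\pi$, that $\RBU^\pi \hat{v}^\pi \le \BU^\pi \hat{v}^\pi$, i.e.\ that $\min_{p\in\aset_{s,a}} p\tr \hat{v}^\pi \le (p\opt_{s,a})\tr \hat{v}^\pi$ for every $s$ and $a=\pi(s)$. On the containment event, $p\opt_{s,a}$ is itself feasible for the inner minimization, so this inequality holds trivially. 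Because $\hat{v}^\pi$ does not depend on $P\opt$ once we condition on $\dataset$, the same argument goes through for all $\pi\in\Pi$ simultaneously, establishing the implication.

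With the implication in hand, the remaining step is a union bound. Taking complements and using the hypothesis $\P_{P\opt}[p\opt_{s,a}\notin\aset_{s,a}\mid\dataset]\le \delta/(SA)$, I would bound
\[
\P_{P\opt}\!\left[ \exists\, s,a : p\opt_{s,a}\notin\aset_{s,a} \,\middle|\, \dataset \right]
\le \sum_{s\in\states}\sum_{a\in\actions} \P_{P\opt}\!\left[ p\opt_{s,a}\notin\aset_{s,a}\,\middle|\,\dataset\right]
\le SA\cdot\frac{\delta}{SA}=\delta~.
\]
Hence $\P_{P\opt}[P\opt\in\aset\mid\dataset]\ge 1-\delta$, and by the implication above the target event inherits probability at least $1-\delta$, which is exactly the claim.

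I do not expect a genuine obstacle here: the entire content is the observation that, in the Bayesian view, conditioning on $\dataset$ freezes $\aset$ and $\hat{v}^\pi$ while leaving $P\opt$ random---the exact reverse of the frequentist setup---so that $\hat{v}^\pi$ never co-varies with $P\opt$. This is the same independence the paper later exploits in \cref{sec:why}, and it is what lets the per-$(s,a)$ feasibility argument and the union bound combine cleanly into the stated guarantee. The only point requiring care is to keep the conditioning on $\dataset$ explicit at every step so that $\hat{v}^\pi$ can legitimately be treated as a constant inside the probability operator.
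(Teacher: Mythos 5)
Your proposal is correct and follows essentially the same route as the paper's proof: reduce via \cref{prop:single_to_many} to the per-$(s,a)$ inequality $\min_{p\in\aset_{s,a}} p\tr \hat{v}^\pi \le (p\opt_{s,a})\tr \hat{v}^\pi$, observe that containment $p\opt_{s,a}\in\aset_{s,a}$ makes $p\opt_{s,a}$ feasible in the minimization, and finish with a union bound over states and actions. The only (stylistic) difference is that you argue by event inclusion and monotonicity of probability, whereas the paper writes the same step as a law-of-total-probability decomposition conditioned on $P\opt\in\aset$; your phrasing is, if anything, the cleaner of the two.
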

	\begin{proof}
		We omit $\aset$ and $P\opt$ from the notation in the proof since they are fixed.
		From \cref{prop:single_to_many}, we have that $\hat{v}^\pi \le v^{\pi}$ if 
		\[ \RBU^\pi \hat{v}^\pi \le \BU^\pi \hat{v}^\pi~. \]
		That is, for each state $s$ and action $a$:
		\[ \min_{p \in \aset_{s,a}} p\tr \hat{v}^\pi \le (p\opt_{s,a})\tr \hat{v}^\pi.  \]
		Using the identity above, the probability that the robust value function is a lower bound can be bounded as follows:
		\begin{gather*}
		\P_{P\opt} \left[\hat{v}^\pi_\aset \le v_P^{\pi}, \; \forall \pi\in\Pi ~|~ \dataset \right] = \P_{P\opt} \left[\min_{p \in \aset_{s,a}} p\tr \hat{v}^\pi \le (p\opt_{s,a})\tr  \hat{v}^\pi, \; \forall \pi\in\Pi, s\in\states,a\in\actions ~|~ \dataset \right] \ge \\
		\ge \P_{P\opt} \left[(p_{s,a}\opt)\tr \hat{v}^\pi \le (p\opt_{s,a})\tr  \hat{v}^\pi, \; \forall \pi\in\Pi, s\in\states,a\in\actions ~|~ P\opt\in \aset,  \dataset \right] \P_{P\opt} \left[P\opt\in \aset ~|~  \dataset  \right] + \\
		+ \P_{P\opt} \left[P\opt\notin \aset ~|~  \dataset  \right] \ge
		1\, \P_{P\opt} \left[P\opt\in \aset ~|~  \dataset  \right] + 0\, \P_{P\opt} \left[P\opt\notin \aset ~|~  \dataset  \right] \ge\\
		\ge \P_{P\opt} \left[P\opt\in \aset ~|~  \dataset  \right]~.
		\end{gather*}
		Now, from the union bound over all states and actions, we get:
		\[ \P_{P\opt} \left[  \hat{v}^\pi > v^{\pi} | \dataset \right] \le \P_{P\opt} \left[P\opt\notin \aset ~|~  \dataset  \right] \le \sum_{s\in\states} \sum_{a\in\actions} \P_{P\opt} \left[ p\opt_{s,a} \notin \aset_{s,a} ~|~ \dataset \right] \le \delta~, \]
		which completes the proof.
	\end{proof}
	
	\todo{
		The following proposition shows that constructing $\mathcal{P}_{s,a}$ as a confidence region translates to the safe estimate of the value function.
		\begin{proposition} 
			Suppose that an ambiguity set $\aset$ satisfies $\P_{P\opt}\left[ p_{s,a}\opt \in \aset_{s,a} ~\vert~ \dataset \right] \ge 1-\delta/(SA)$ for each state $s$ and action $a$. Then:
			\begin{equation} \label{eq:safety_requirement}
			\P_{P\opt} \left[ \max_{v\in\Real^S} \min_{p \in \aset_{s,a}} (p - p_{s,a}\opt)\tr v \le 0  ~\middle|~ \dataset \right] \ge 1-\frac{\delta}{SA},
			\end{equation}
			and in addition $\P_{P\opt} \left[  \hat{v} \le v_P^{\hat{\pi}\opt} | \dataset \right] \geq 1-\delta$.
		\end{proposition}
		\begin{proof}
			The main idea of the proof is to show that the robust Bellman update for any policy is a lower bound on the true Bellman update as long as the hypothesis of the proposition holds. Let $p\opt_{s,a}$ be the minimizer inside of the probability in \eqref{eq:safety_requirement}. Now, whenever $p\opt_{s,a} \in \aset_{s,a}$ then:
			\[ \max_{v\in\Real^S} \min_{p \in \aset_{s,a}} (p - p_{s,a}\opt)\tr v \le  \max_{v\in\Real^S} (p_{s,a}\opt - p_{s,a}\opt)\tr v = 0~.\]
			Then, we can bound the probability as follows:
			\[
			\P_{P\opt} \left[ \max_{v\in\Real^S} \min_{p \in \aset_{s,a}} (p - p_{s,a}\opt)\tr v \le 0  ~\middle|~ \dataset \right]  \ge \P_{P\opt} \left[ p\opt_{s,a} \in \aset_{s,a} ~\middle|~ \dataset \right]  \ge 1 - \delta/AS~.
			\]
			Finally, by \cref{prop:single_to_many} and the union bound, we get:
			\[ \P_{P\opt} \left[  \hat{v} > v_P^{\hat{\pi}\opt} | \dataset \right] \le \sum_{s\in\states} \sum_{a\in\actions} \P_{P\opt} \left[ p\opt_{s,a} \in \aset_{s,a} ~|~ \dataset \right] \le \delta~, \]
			which completes the proof.
	\end{proof}}
	
	\todo{
		\begin{lemma} \label{lem:optimal_set_single_other}
			Consider an ambiguity set $\aset_{s,a}$ and a value $g\opt(v)$ as defined in \eqref{eq:optimal_hyperplane} with respect to some fixed value function $v$. If $\aset_{s,a} \subseteq \{ p \in \Delta^S \ss v\tr p \ge g\opt(v) \}$ then $ \min_{p\in\aset_{s,a}} p\tr v \ge g\opt(v)$.
		\end{lemma}
		\begin{proof}
			By contradiction. Assume that $\min_{p\in\aset_{s,a}} p\tr v < g\opt(v)$ and let $\hat{p}\in\aset_{s,a}$ by a minimizer. This contradicts the subset requirement, since $\hat{p}\in\aset_{s,a}$ and $\hat{p}\tr v < g\opt(v)$.
		\end{proof}
	}
	
	\section{Technical Proofs} \label{app:proofs}
	
	\subsection{Proof of \cref{prop:single_to_many}}
	\begin{proof}
		Using the assumption $\RBU^\pi \hat{v}^\pi \le \BU^\pi \hat{v}^\pi$, and from $\hat{v}^\pi = \RBU^\pi \hat{v}^\pi$ and $v^\pi = \BU^\pi v^\pi$, we get by algebraic manipulation:    
		\[
		\hat{v}^\pi - v^{\pi} = \RBU^\pi \hat{v}^\pi - \BU_P^{\pi} v^{\pi} \le \BU^{\pi} \hat{v}^\pi - \BU^{\pi} v^{\pi} = \gamma P_{\pi} (\hat{v}^\pi - v^{\pi})~.
		\]
		Here, $P_\pi$ is the transition probability matrix for the policy $\pi$. Subtracting $\gamma P_{\pi}(\hat{v}^\pi - v^{\pi})$ from the above inequality gives:
		\[ (\eye - \gamma P_{\pi}) (\hat{v}^\pi - v^{\pi}) \le \zero ~,\]
		where $\eye$ is the identity matrix. Because the matrix $(\eye - \gamma P_{\pi\opt})^{-1}$ is monotone, as can be seen from its Neumann series, we get:
		\[ \hat{v}^\pi - v^{\pi} \le (\eye - \gamma P_{\pi})^{-1} \zero = \zero ~,\] 
		which proves the result.
	\end{proof}
	
	\subsection{Proof of \cref{cor:hoeffding_bound}}
	\begin{proof}
		The first part of the statement follows directly from \cref{prop:confidence_interval_freq} and \cref{lem:single_one}. The second part of the statement follows from the fact that the lower bound property holds uniformly across all policies.
	\end{proof}
	
	\subsection{Proof of \cref{cor:bci_bound}}
	\begin{proof}
		The first part of the statement follows directly from \cref{prop:confidence_interval} and the definition of $\psi^B_{s,a}$. The second part of the statement follows from the fact that the lower bound property holds uniformly across all policies.
	\end{proof}
	
	\section{$L_1$ Concentration Inequality Bounds}
	
	In this section, we describe a new elementary proof of a bound on the $L_1$ distance between the estimated transition probability distribution and the true one. It simplifies the proofs of \citeasnoun{Weissman2003xx} but also leads to coarser bounds. We include the proof here in order to derive the tighter bound in \cref{sec:improved_bounds}.  Note that in the frequentist setting the ambiguity set $\aset$ is a random variable that is a function of the dataset $\dataset$. 
	
	Recall that our ambiguity sets are defined as $L_1$ balls around the expected  transition probabilities $\bar{p}_{s,a}$:
	\begin{equation}  \label{eq:ambiguity_l1}
	\aset_{s,a} = \{p \in \Delta^\statecount \ss \norm{p - \bar{p}_{s,a} }_1 \le \psi_{s,a} \} ~. 
	\end{equation}
	\cref{prop:confidence_interval_freq} implies that the size of the $L_1$ balls must be chosen as follows:
	\begin{equation} \label{eq:rectangular_bound_union}
	\P\left[ \| \bar{p}(s,a) - p\opt(s,a)\|_1 \le \psi_{s,a} \, \right] \ge 1- \delta/(SA)~.
	\end{equation}
	
	We can now express the necessary size $\psi_{s,a}$ of the ambiguity sets in terms of $n_{s,a}$, which denotes the number of samples in $\dataset$ that originate with a state $s$ and an action $a$. 
	\begin{lemma}[$L_1$ Error bound] \label{lem:single_one}
		Suppose that $\bar{p}_{s,a}$ is the empirical estimate of the transition probability obtained from $n_{s,a}$ samples for each $s\in\states$ and $a\in\actions$. Then:
		\[ \P \left[ \| \bar{p}_{s,a} - p\opt_{s,a}\|_1 \ge \psi_{s,a} \, \right] \le (2^{S} - 2) \exp\left(-\frac{\psi_{s,a}^2 n_{s,a}}{2}\right) ~.\]
		Therefore, for any $\delta \in [0,1]$:
		\[ \P\left[ \| \bar{p}_{s,a} - p\opt_{s,a}\|_1 \le \sqrt{\frac{2}{n_{s,a}}  \log \frac{S A (2^{S} - 2)}{\delta} } \, \right] \le 1 - \delta / (S A)  ~.  \]
	\end{lemma}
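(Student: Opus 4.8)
The plan is to reduce the $L_1$ tail bound to a union bound over subsets of $\states$, via the classical identity relating the $L_1$ distance between two distributions to their total variation distance. For $A \subseteq \states$, write $\bar{p}_{s,a}(A) = \sum_{i\in A}(\bar{p}_{s,a})_i$ and likewise $p\opt_{s,a}(A)$. The first step is to establish
\[ \norm{\bar{p}_{s,a} - p\opt_{s,a}}_1 = 2\max_{A\subseteq\states}\bigl(\bar{p}_{s,a}(A) - p\opt_{s,a}(A)\bigr)~, \]
which holds because the maximizing set is $A^\star = \{i : (\bar{p}_{s,a})_i \ge (p\opt_{s,a})_i\}$, and because both vectors sum to one the positive and negative parts of $\bar{p}_{s,a}-p\opt_{s,a}$ carry equal mass, so each accounts for exactly half of the $L_1$ distance. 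Consequently the event $\{\norm{\bar{p}_{s,a}-p\opt_{s,a}}_1 \ge \psi_{s,a}\}$ coincides with the event that \emph{some} subset $A$ satisfies $\bar{p}_{s,a}(A) - p\opt_{s,a}(A) \ge \psi_{s,a}/2$. The empty set and the full set $\states$ both give difference zero, so they can be discarded, leaving only the $2^S-2$ proper nonempty subsets.

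Next I would fix such a subset $A$ and exploit the key observation that $\bar{p}_{s,a}(A)$ is the empirical frequency with which the $n_{s,a}$ samples originating from $(s,a)$ land in $A$, i.e. an average of $n_{s,a}$ independent $\{0,1\}$-valued variables with common mean $p\opt_{s,a}(A)$; independence is exactly the modeling assumption that each $s'$ in a sample $(s,a,s')$ is drawn from $p\opt_{s,a}$. Hoeffding's inequality then gives
\[ \P\bigl[\bar{p}_{s,a}(A) - p\opt_{s,a}(A) \ge \psi_{s,a}/2\bigr] \le \exp\bigl(-2 n_{s,a}(\psi_{s,a}/2)^2\bigr) = \exp\bigl(-\tfrac{1}{2}\psi_{s,a}^2 n_{s,a}\bigr)~. \]
Taking a union bound over the $2^S-2$ relevant subsets, together with the event identity from the first step, yields the first displayed inequality of the lemma.

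For the second part I would simply invert the tail bound: setting $(2^S-2)\exp\bigl(-\tfrac{1}{2}\psi^2 n_{s,a}\bigr)$ equal to $\delta/(SA)$ and solving for $\psi$ gives $\psi_{s,a} = \sqrt{(2/n_{s,a})\log\bigl(SA(2^S-2)/\delta\bigr)}$, and substituting this choice back shows that exceeding it has probability at most $\delta/(SA)$, equivalently the complementary event has probability at least $1-\delta/(SA)$, as claimed.

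The only genuinely non-mechanical step is the first one: verifying the $L_1$/total-variation identity and, in particular, justifying that the supremum over all possible separating ``directions'' is attained at one of finitely many subsets, so that a discrete union bound applies with the factor $2^S-2$. Everything after that—the per-subset Hoeffding estimate, the union bound, and the algebraic inversion—is routine. I expect the identity and the reduction to a finite union to be where care is needed, whereas the probabilistic content is a single application of Hoeffding's inequality.
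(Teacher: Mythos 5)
Your proposal is correct and follows essentially the same route as the paper's proof: the identity $\norm{\bar{p}_{s,a}-p\opt_{s,a}}_1 = 2\max_{A}\bigl(\bar{p}_{s,a}(A)-p\opt_{s,a}(A)\bigr)$ (the paper derives it via the dual-norm characterization of $L_1$, you via the total-variation maximizer, but it is the same reduction), followed by a union bound over the $2^S-2$ proper nonempty subsets, a per-subset Hoeffding bound $\exp(-\psi_{s,a}^2 n_{s,a}/2)$, and algebraic inversion to obtain the stated $\psi_{s,a}$. No gaps; the only cosmetic difference is that the paper phrases the subset maximization through indicator vectors $\one_{\mathcal{Q}}$ optimizing $z\tr(\bar{p}-p\opt)$ over $\norm{z}_\infty \le 1$.
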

	\begin{proof}
		To shorten the notation, we omit the indexes $s,a$ throughout the proof; for example $\bar{p}$ is used instead of the full $\bar{p}_{s,a}$. First, express the $L_1$ distance between two distributions $\bar{p}$ and $p\opt$ in terms of an optimization problem. Let $\one_{\mathcal{Q}} \in \Real^\states$ be the indicator vector for some subset $\mathcal{Q} \subset \states$. Then:
		\[
		\begin{aligned}
		\| \bar{p} - p\opt \|_1  &= \max_{z} \left\{ z\tr (\bar{p} - p\opt) \ss \| z \|_\infty \le 1 \right\} = \\
		&= \max_{\mathcal{Q} \in 2^\states} \left\{ \one_{\mathcal{Q}}\tr (\bar{p} - p\opt) - (\one-\one_{\mathcal{Q}})\tr(\bar{p} - p\opt)  \ss 0 < |\mathcal{Q}| < m \right\} \\
		&\stackrel{\text{(a)}}{=}  2 \max_{\mathcal{Q} \in 2^\states} \left\{ \one_{\mathcal{Q}}\tr (\bar{p} - p\opt) \ss 0 < |\mathcal{Q}| < m  \right\}  ~.
		\end{aligned}
		\]
		Here, (a) holds because $\one\tr (\bar{p} - p\opt) = 0$. Using the expression above, the target probability can be bounded as follows:
		\begin{align*}
		\P\left[ \Vert \bar{p} - p \opt \rVert_1 > \psi \right] &= \P \left[ 2 \max_{\mathcal{Q} \in 2^\states} \left\{\one_{\mathcal{Q}}\tr (\bar{p} - p\opt) \ss 0 < |\mathcal{Q}| < m \right\} > \psi \right] \\
		&\stackrel{\text{(a)}}{\le}  (|\mathcal{Q} |- 2)  \max_{ \mathcal{Q} \in 2^\states} \left\{ \P\left[ \one_{\mathcal{Q}}\tr (\bar{p} - p\opt) > \frac{\psi}{2} \right]  \ss 0 < |\mathcal{Q}| < m \right\} \\
		&\stackrel{\text{(b)}}{\le} (|\mathcal{Q}| - 2)  \exp\left(-\frac{\psi^2 n}{2} \right) = (2^{S} - 2)  \exp\left(-\frac{\psi^2 n}{2} \right)~.
		\end{align*}
		The inequality (a) follows from union bound and the inequality (b) follows from the Hoeffding's inequality since $\one_{\mathcal{Q}}\tr \bar{p} \in [0,1]$ for any $\mathcal{Q}$ with the mean of $\one_{\mathcal{Q}}\tr \bar{p}\opt$. 
	\end{proof}

	\subsection{Ambiguity Sets for Monotone Value Functions} \label{sec:improved_bounds}
	
	A significant limitation of the result in \cref{lem:single_one} is that the $\psi$ depends linearly on the number of states. We now explore an assumption that can alleviate this important drawback when the value functions are guaranteed to be monotone. In particular, the monotonicity assumption states that the value functions $v$ of the optimal robust policy must be non-decreasing in some arbitrary order which is know ahead of time. Assume, therefore, without loss of generality that:
	\begin{equation} \label{eq:value_property}
	v_1 \ge v_2 \ge \ldots \ge v_n  ~.
	\end{equation}
	
	Admittedly, monotonicity is a restrictive assumption, but we explore it in order to understand the greatest possible gains from tightening the known concentration inequalities. Yet, monotonicity of this type occurs in some problems, such as inventory management in which the value does not decrease with increasing inventory levels or medical problems in which the value does not increase with a deteriorating health state.
	
	It is important to note that any MDP algorithm that relies on the assumption \eqref{eq:value_property} needs to also enforce it. That means, the algorithm must prevent generating value functions that violate the monotonicity assumption. Practically, this could be achieved by representing the value function as a linear combination of monotone features.
	
	The bound \cref{lem:single_one} is large  because of the term $2^S$ which derives from the use of a union bound. The union bound is used because the $L_1$ norm can be represented as a maximum over an exponentially many linear functions:
	\[
	\| x \|_1 = \max_{\mathcal{Q} \subseteq \mathcal{I}} \left(\one_{\mathcal{Q}} - \one_{\mathcal{I} \setminus \mathcal{Q}} \right)\tr x ~. 
	\]
	Here, the set $\mathcal{I} = 2^S$ represents all indexes of $x$ and $\one_{\mathcal{Q}}$ is a vector that is one for all elements of $\mathcal{Q}$ and zero otherwise. We now show that under the monotonicity property \eqref{eq:value_property}, the $L_1$ norm can be represented as a maximum over a \emph{linear} (in states) number of linear functions. In particular, the worst-case optimization problem of the nature:
	\begin{equation} \label{eq:main_optimization}
	\begin{mprog}
	\minimize{p} v\tr p
	\stc \left(\one_{\mathcal{Q}} - \one_{\mathcal{I} \setminus \mathcal{Q}} \right)\tr (p - \bar{p})\le \psi, \quad \forall \mathcal{Q} \subseteq \mathcal{I}
	\cs \one\tr p = 1, 
	\cs p\ge 0
	\end{mprog} \end{equation}
	can be replaced by the following optimization problem:
	\begin{equation} \label{eq:simplified_optimization}
	\begin{mprog}
	\minimize{p} v\tr p
	\stc (\one_{k\ldots n} - \one_{1\ldots(k-1)}) \tr (p - \bar{p}) \le \psi, \quad  \forall k = 0,\ldots, (n+1)
	\cs \one\tr p = 1, 
	\cs p\ge 0
	\end{mprog} \end{equation}
	
	\begin{lemma} \label{lem:reduced_monotone}
		Suppose that \eqref{eq:value_property} is satisfied. Then the optimal objective values of \eqref{eq:main_optimization} and \eqref{eq:simplified_optimization} coincide.
	\end{lemma}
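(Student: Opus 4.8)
The plan is to prove the two optimal values coincide by a two-sided argument: the easy inequality comes from a relaxation observation, and the hard one from a single-crossing property of the minimizer forced by the monotonicity assumption \eqref{eq:value_property}.

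First I would record the trivial inequality. Program \eqref{eq:simplified_optimization} keeps only those constraints of \eqref{eq:main_optimization} indexed by the threshold sets $\mathcal{Q} = \{k,\ldots,n\}$ and discards all others, so its feasible region is a superset and its optimal value is no larger than that of \eqref{eq:main_optimization}. It then remains to prove the reverse inequality, for which it suffices to exhibit a minimizer of \eqref{eq:simplified_optimization} that is actually feasible for \eqref{eq:main_optimization}, i.e.\ that satisfies every subset constraint $(\one_{\mathcal{Q}} - \one_{\mathcal{I}\setminus\mathcal{Q}})\tr(p - \bar{p}) \le \psi$.

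To expose the structure I substitute $p = \bar{p} + d$, so that $\one\tr d = 0$, $d \ge -\bar{p}$, the full constraint of \eqref{eq:main_optimization} reads $\norm{d}_1 = 2\sum_{\ell\,:\,d_\ell > 0} d_\ell \le \psi$, and the retained threshold constraints read $2\sum_{\ell \ge k} d_\ell \le \psi$ for each $k$. Writing $D_k = \sum_{\ell \ge k} d_\ell$ and applying Abel summation gives $v\tr d = \sum_{k=2}^n (v_k - v_{k-1}) D_k$, using $D_1 = \one\tr d = 0$. By \eqref{eq:value_property} every coefficient $v_k - v_{k-1}$ is nonpositive, so the objective is minimized by pushing each partial suffix sum $D_k$ as high as the constraints $D_k \le \psi/2$ and $D_{k+1} - D_k \le \bar{p}_k$ (the latter being $d \ge -\bar{p}$) permit. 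I would argue this drives an optimal $(D_k)$ to be nondecreasing in $k$, equivalently that the optimal deviation $d^\star$ is single-crossing: $d^\star_\ell \le 0$ on a prefix $\ell < t$ and $d^\star_\ell \ge 0$ on the suffix $\ell \ge t$.

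Granting single-crossing, the conclusion is immediate: the positive part of $d^\star$ is supported on $\{t,\ldots,n\}$, so $\sum_{\ell\,:\,d^\star_\ell > 0} d^\star_\ell = \sum_{\ell \ge t} d^\star_\ell = D_t \le \psi/2$, whence $\norm{d^\star}_1 \le \psi$ and $\bar{p} + d^\star$ is feasible for \eqref{eq:main_optimization} with the same objective. The main obstacle is the single-crossing claim itself. The clean intuition — move deviation mass from a low-index (high-value) coordinate with $d_i > 0$ to a high-index (low-value) coordinate with $d_j < 0$, which cannot raise the objective since $v_i \ge v_j$ — is complicated by feasibility, because shifting mass rightward strictly increases the intermediate sums $D_k$ for $i < k \le j$ and can bump a retained threshold constraint against its bound $\psi/2$. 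I expect the bulk of the work to be showing this exchange can always be arranged inside the feasible set; the cleanest route is probably to bypass the exchange entirely and solve the reformulated program directly in the variables $D_k$, where the nonpositive objective coefficients together with the box bounds $D_k \le \psi/2$ and the growth bounds $D_{k+1} - D_k \le \bar{p}_k$ make the pointwise-maximal $D_k = \min(\psi/2,\ \sum_{m<k}\bar{p}_m)$ both feasible and optimal, and this $D$ is nondecreasing, yielding a single-crossing $d^\star$ as required.
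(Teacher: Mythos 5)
Your proposal is correct, and it reaches the nontrivial inequality by a genuinely different route from the paper. The paper's proof is an exchange argument by contradiction: it assumes every optimizer of \eqref{eq:simplified_optimization} violates some subset constraint of \eqref{eq:main_optimization}, picks indices $i<j$ straddling the violated set $\mathcal{C}$, shifts a mass $\epsilon$ between two coordinates, argues (using monotonicity of $v$) that neither the objective nor the constraint violation increases, and iterates until feasibility in \eqref{eq:main_optimization} is restored. Your route is constructive: in the suffix-sum variables $D_k=\sum_{\ell\ge k}d_\ell$ the objective becomes $\sum_{k=2}^{n}(v_k-v_{k-1})D_k$ with nonpositive coefficients, the feasible set of \eqref{eq:simplified_optimization} becomes exactly $\{D \ss D_1=D_{n+1}=0,\; D_k\le\psi/2,\; D_{k+1}-D_k\le\bar p_k\}$, and a forward induction shows every feasible $D$ is coordinatewise dominated by $D_k^\star=\min\left(\psi/2,\,\sum_{m<k}\bar p_m\right)$, which is itself feasible and therefore optimal. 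Because $D^\star$ is nondecreasing for $k\le n$, the associated deviation satisfies $d^\star_k\le 0$ for $k<n$ and $d^\star_n=D^\star_n\ge 0$, so $\norm{d^\star}_1=2D_n^\star\le\psi$ and the corresponding $p$ is feasible for \eqref{eq:main_optimization}, which completes the hard direction. What your argument buys is rigor and an explicit worst-case solution: the difficulty you flag with exchanges (moving mass rightward inflates the intermediate suffix sums and can collide with the retained threshold constraints) is precisely the step the paper's proof treats informally, along with termination of the repeated exchanges, whereas your closed-form optimizer avoids both issues. What the paper's exchange style buys in principle is independence from guessing a closed-form solution, but here the closed form is available, so your proof is the shorter and tighter of the two.
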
 
	\begin{proof}
		Let $f^a$ be the optimal objective of \eqref{eq:main_optimization} and let $f^b$ be the optimal objective of \eqref{eq:simplified_optimization}. The inequality $f^a \ge f^b$ can be shown readily since \eqref{eq:simplified_optimization} only relaxes some of the constraints of \eqref{eq:main_optimization}. 
		
		It remains to show that $f^a \le f^b$. To show the inequality by contradiction, assume that each optimal solution $p^b$ to \eqref{eq:simplified_optimization} is infeasible in \eqref{eq:main_optimization} (otherwise $f^a \le f^b$). Let the constraint violated by $p^b$ be:
		\[ \left(\one_{\mathcal{C}} - \one_{2^S \setminus \mathcal{C}} \right)\tr (p - \bar{p})\le \psi, \]
		for some set $\mathcal{C}$. Since this constraint is not present in \eqref{eq:simplified_optimization}, that means that there exist $i$ and $j$ such that $i < j$, $i \in \mathcal{C}$, $j \notin \mathcal{C}$, and because the constraint is violated:
		\[ p_i^b = \bar{p}_i - \epsilon, \quad \text{or} \quad p_j^b = \bar{p}_j + \epsilon  \] 
		for some $\epsilon > 0$. Assume now that $p_i^b = \bar{p}_i - \epsilon$, the case when $p_j^b = \bar{p}_j + \epsilon$ follows similarly. 
		
		Now, choose the largest $k > i$ possible, and let $p^a = p^b$, with the exception of:
		\[ p_i^a = p_i^b + \epsilon, \quad \text{and} \quad p_k^a = p_k^b - \epsilon ~.\] 
		This does not increase the violation of the constraint by $p^a$ over $p^b$:
		\[ \left(\one_{\mathcal{C}} - \one_{2^S \setminus \mathcal{C}} \right)\tr (p^a - \bar{p})\le \left(\one_{\mathcal{C}} - \one_{2^S \setminus \mathcal{C}} \right)\tr (p^b - \bar{p}), \]
		And it does not increase the objective function:
		\[ v\tr p^a = v\tr p^b - \epsilon (v_i - v_j) \le v\tr p^b, \]
		and thus remains optimal in \eqref{eq:simplified_optimization}. Repeating these steps until no constraints are violated leads to a contradiction with the lack of an optimal solution to \eqref{eq:simplified_optimization} that is not optimal in \eqref{eq:main_optimization}.
	\end{proof}
	
	\Cref{lem:reduced_monotone} shows that we can replace the $L_1$ ambiguity set in \eqref{eq:ambiguity_l1} by the following set without affecting the solution.
	\begin{equation} \label{eq:simplified_l1_ambiguity}
	\aset_{s,a} = \{ p \in \Delta^\statecount \ss  
	(\one_{k\ldots n} - \one_{1\ldots(k-1)}) \tr (p - \bar{p}_{s,a}) \le \psi_{s,a}, \quad  \forall k = 0,\ldots, (n+1) \}
	\end{equation}
	
	Now, following the same steps as the proof of \cref{lem:single_one} but using \eqref{eq:simplified_l1_ambiguity} in place of \eqref{eq:ambiguity_l1} gives us the following result.
	\begin{lemma}[$L_1$ Error bound] \label{lem:single_one_monotone}
		Suppose that $\bar{p}_{s,a}$ is the empirical estimate of the transition probability obtained from $n_{s,a}$ samples for each $s\in\states$ and $a\in\actions$. Then:
		\[ \P \left[ \| \bar{p}_{s,a} - p\opt_{s,a}\|_1 \ge \psi_{s,a} \, \right] \le S \exp\left(-\frac{\psi_{s,a}^2 n_{s,a}}{2}\right) ~.\]
		Therefore, for any $\delta \in [0,1]$:
		\[ \P\left[ \| \bar{p}_{s,a} - p\opt_{s,a}\|_1 \le \sqrt{\frac{2}{n_{s,a}}  \log \frac{S^2 A}{\delta} } \, \right] \le 1 - \delta / (S A)  ~.  \]
	\end{lemma}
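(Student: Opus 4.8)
The plan is to replay the proof of \cref{lem:single_one} almost verbatim, changing only one thing: the exponentially many indicator vectors entering the union bound are replaced by the linearly many ``threshold'' vectors licensed by \cref{lem:reduced_monotone}. Recall that in \cref{lem:single_one} the distance was written as $\norm{\bar{p} - p\opt}_1 = 2\max_{\mathcal{Q}} \one_{\mathcal{Q}}\tr(\bar{p} - p\opt)$ over all nonempty proper subsets $\mathcal{Q}\subseteq\states$, and the factor $2^S - 2$ arose from a union bound over these subsets. Under the monotonicity ordering \eqref{eq:value_property}, \cref{lem:reduced_monotone} shows that the nature's worst-case value in \eqref{eq:main_optimization} coincides with its value in the reduced problem \eqref{eq:simplified_optimization}, whose feasible set \eqref{eq:simplified_l1_ambiguity} is cut out by only the vectors $z_k = \one_{k\ldots n} - \one_{1\ldots(k-1)}$ for $k = 0,\ldots,n+1$. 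Hence it is legitimate to use \eqref{eq:simplified_l1_ambiguity} as the operative ambiguity set, and via the safety criterion \eqref{eq:single_to_many} the relevant ``bad event'' is no longer that $p\opt$ leaves the full $L_1$ ball, but merely that $p\opt$ violates one of these finitely many reduced constraints.

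First I would translate each reduced constraint into a statement about a single suffix set. Using $\one\tr(p\opt - \bar{p}) = 0$, the constraint $z_k\tr(p\opt - \bar{p}) \le \psi$ is equivalent to $\one_{\{k,\ldots,n\}}\tr(p\opt - \bar{p}) \le \psi/2$; the choices $k\in\{0,1,n+1\}$ give trivially satisfied constraints ($0 \le \psi$), leaving at most $S-1$ nontrivial suffix sets $\mathcal{Q}_k = \{k,\ldots,n\}$. Note that this family is \emph{one-sided}: only the mass that $p\opt$ places on the low-value tail is constrained, which is exactly the direction the minimizing nature exploits, so no complementary (prefix) constraints are needed. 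Thus the event that $p\opt$ fails to lie in \eqref{eq:simplified_l1_ambiguity} is contained in $\bigcup_k \{\one_{\mathcal{Q}_k}\tr(p\opt - \bar{p}) > \psi/2\}$, a union of at most $S$ events.

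I would then finish exactly as in \cref{lem:single_one}: apply the union bound over the $\le S$ suffix sets and bound each term by Hoeffding's inequality, observing that $\one_{\mathcal{Q}_k}\tr\bar{p}$ is an average of $n_{s,a}$ i.i.d.\ Bernoulli variables with true mean $\one_{\mathcal{Q}_k}\tr p\opt$, so that $\P[\one_{\mathcal{Q}_k}\tr(p\opt - \bar{p}) > \psi/2] \le \exp(-\psi^2 n_{s,a}/2)$. Summing yields that the probability $p\opt_{s,a}$ violates a constraint of \eqref{eq:simplified_l1_ambiguity} is at most $S\exp(-\psi_{s,a}^2 n_{s,a}/2)$, which is the first claim. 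The second claim follows by equating this bound with the per-state-action budget: solving $S\exp(-\psi^2 n_{s,a}/2) = \delta/(SA)$ for $\psi$ gives $\psi_{s,a} = \sqrt{(2/n_{s,a})\log(S^2 A/\delta)}$, precisely the stated radius with the $2^S$ factor of \cref{lem:single_one} collapsed to a single power of $S$.

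The step I expect to require the most care is the reduction bookkeeping rather than any hard inequality. One must argue cleanly that, under monotonicity, safety is preserved when the full $L_1$ ball is swapped for \eqref{eq:simplified_l1_ambiguity}---this is \cref{lem:reduced_monotone} combined with the fact that \eqref{eq:single_to_many} only needs $\min_{p\in\aset_{s,a}} p\tr v \le (p\opt_{s,a})\tr v$, which holds as soon as $p\opt_{s,a}$ is feasible in the reduced set---and that the one-sided family of suffix sets is both sufficient and of size at most $S$. Once this is in place, the probabilistic content is the identical union-bound-plus-Hoeffding computation as before, and the only visible change is the number of events entering the union bound.
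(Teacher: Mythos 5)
Your proposal matches the paper's proof, which consists of the single remark that one should repeat the argument of \cref{lem:single_one} with the reduced constraint set \eqref{eq:simplified_l1_ambiguity} in place of the full $L_1$ ball \eqref{eq:ambiguity_l1}---exactly the union bound over the at most $S$ suffix events followed by Hoeffding's inequality that you carry out, with the same solve-for-$\psi$ step at the end. You even make explicit the point the paper leaves implicit, namely that the bounded event is violation of the reduced constraints (which suffices for safety via \cref{lem:reduced_monotone} and \eqref{eq:single_to_many}) rather than a genuine $L_1$ deviation, so your write-up is, if anything, more careful than the original.
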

	
	\section{Detailed Descriptions of Selected Algorithms}
	
	\subsection{Computing Bayesian Credible Region}
	
	\begin{algorithm}[H]
		\KwIn{Distribution $\theta$ over $p\opt_{s,a}$, confidence level $\delta$, sample count $m$}
		\KwOut{Nominal point $\bar{p}_{s,a}$ and $L_1$ norm size $\psi_{s,a}$}
		Sample $X_1, \ldots, X_m \in \Delta^S$ from $\theta$: $X_i \sim \theta $\;
		Nominal point: $\bar{p}_{s,a} \gets (1/ m) \sum_{i=1}^m X_i $\;
		Compute distances $d_i \gets \lVert \bar{p}_{s,a} - X_i \rVert_1$ and sort \emph{increasingly}\;
		Norm size: $\psi_{s,a} \gets d_{(1-\delta)\,m}$\;
		\Return{$\bar{p}_{s,a}$ and $\psi_{s,a}$}\;
		\caption{Bayesian Credible Interval (BCI)} \label{alg:bayes}
	\end{algorithm}
	
\end{document}